\DeclarePairedDelimiter{\ceil}{\lceil}{\rceil}
\newcolumntype{L}[1]{>{\raggedright\let\newline\\\arraybackslash\hspace{0pt}}m{#1}}
\newcolumntype{C}[1]{>{\centering\let\newline\\\arraybackslash\hspace{0pt}}m{#1}}
\newcolumntype{R}[1]{>{\raggedleft\let\newline\\\arraybackslash\hspace{0pt}}m{#1}}
\newtheorem{prop}{Proposition}
 \newtheorem{assumption}{Assumption}
\newcolumntype{M}{>{\centering\arraybackslash}m{.44\linewidth}}
\newcommand{\KG}{\text{KG}}
\newcommand{\IE}{\text{IE}}
\newcommand{\UCBE}{\text{UCB-E}}
\newcommand{\Kriging}{\text{Kriging}}
\newtheorem{theorem}{\sc Theorem}[section]
\newtheorem{lemma}[theorem]{\sc Lemma}
\newtheorem{proposition}[theorem]{\sc Proposition}
\newtheorem{definition}[theorem]{\sc Definition}
\begin{document}

\title{Finite-time Analysis for the Knowledge-Gradient Policy}
\author{Yingfei Wang \thanks{
       Department of Computer Science, Princeton University, Princeton, NJ  08540, USA, yingfei@cs.princeton.edu }
       \and
	Warren Powell \thanks{
       Department of Operations Research and Financial Engineering, Princeton University, Princeton, NJ  08540, USA, powell@princeton.edu}}

\maketitle

\begin{abstract}

We consider sequential decision problems in which we adaptively choose one of finitely many alternatives and observe a stochastic reward. We offer a new perspective of interpreting Bayesian ranking and selection problems as adaptive stochastic multi-set maximization problems and derive the first finite-time bound of  the knowledge-gradient policy for  adaptive submodular objective functions. In addition, we introduce the concept of prior-optimality  and provide another insight into the performance of the knowledge gradient policy based on the submodular assumption on the value of information.  We demonstrate submodularity for the two-alternative case and provide other conditions for more general problems, bringing out the issue and importance of submodularity in learning problems.   Empirical experiments are conducted to further illustrate the finite time behavior of the knowledge gradient policy. 
\end{abstract}

\section{Introduction}
We consider sequential decision problems in which at each time step, we choose one of finitely many alternatives and observe a random reward. The rewards are independent of each other and follow some unknown probability distribution. One goal can be to identify the alternative with the best expected performance within a limited measurement budget, which is the objective of Bayesian ranking and selection problems.  Ranking and selection problems  are examples of sequential decision making problems with partial information that address the exploration-exploitation trade-off. Since the learner does not know the true distribution of each alternative, it needs to explore the choices that might give good rewards in the future as well as exploit the alternatives that  appear to be better based on previous observations.

Ranking and selection problems arise in many settings. We
may have to choose a type of material that has the best performance,
the features in a laptop or car that produce the
highest sales, or the molecular combination that produces
the most effective drug. Often, the cost of a measurement
may be substantial. Laboratory or field experiments may take a day or several weeks. For this reason, we assume we
have a limited budget for making measurements.

 Raiffa and Schlaifer established the Bayesian framework for R$\&$S problems \cite{raiffa1961applied}.  Several two-stage and sequential procedures exist for selecting the best alternative.  Branke et. al   made a thorough comparison of several fully sequential sampling procedures \cite{branke2007selecting}. They indicate that the optimal computing budget allocation (OCBA) \cite{chen1996gradient,chen2000simulation, he2007opportunity} and value of information procedures (VIP) \cite{chick2001new} perform quite well and better than a deterministic or two-stage policy \cite{chen2006efficient}. Another single-step Bayesian look-ahead policy first introduced by \cite{gupta1996bayesian} and then further studied by \cite{frazier2008knowledge} is called the ``knowledge-gradient policy" (KG).  It chooses to measure the  alternative that maximizes the single-period expected  value of  information.  Whereas the above mentioned policies assumed an independent normal or one-dimensional Wiener process prior on the alternatives' true means,  Frazier et. al modified the knowledge-gradient policy to handle correlated multivariate normal belief on the mean values of these rewards \cite{frazier2009knowledge}.

A similar field is the multi-armed bandit problem, which  were originally studied under Bayesian assumptions \cite{gittins1979bandit}.  A widely used class of policies for multi-armed bandit problems
is called {\it upper confidence bounding} policies (UCB). Different UCB-type variants have been developed for many types of reward distributions and
have provable logarithmic regret bounds \cite{lai1985asymptotically, agrawal1995sample, auer2002finite,
kleinberg2010regret,bubeck2012bandits}. By contrast, knowledge gradient policies, which enjoy some nice theoretical properties, have never been characterized by the type of regret bounds for which UCB policies are famous.

This paper makes the following contributions:   We first establish the connection between  Bayesian ranking and selection problem and adaptive stochastic multi-set function maximization problems where each multi-set corresponds to a set of selected alternatives. The multi-set representation captures our ability to evaluate the same alternative more than once.  This  new perspective  offers a new line of analysis for the properties of value-of-information policies. We derive the first finite-time bound for the knowledge gradient policy for R\&S problems under the assumption that the utility function is adaptive submodular.  However,  pathwise adaptive submodularity can fail in offline learning settings when the utility function itself involves a maximum. To this end, instead of the  pathwise behavior analyses of the utility function, we further study its average behavior by taking expectations over the observations given any fixed sample allocation, resulting in a well-known quantity: the value of information. As a result,  we introduce the  concept of the prior-value of a policy and analyze the prior-optimality of the KG policy to provide another insight into its performance based on the submodular assumption of the value of information that is weaker than  adaptive submodularity. To accomplish this, we build on the general structure of the analysis of greedy algorithms given in \cite{nemhauser1978analysis} and \cite{golovin2010adaptive}.  We demonstrate submodularity for the two-alternative case and provide other conditions for more general problems, filling in a gap in the analysis of the knowledge gradient policy. Finally, we propose experiments to illustrate our theoretical analysis on the finite time behavior of the knowledge gradient policy. We further compare the KG policy with other policies with or without theoretical guarantees. Aside from the fact that the KG policy  performs competitively with or significantly
better than other policies especially in early iterations,  we draw the conclusion that there is no universal best policy for all problem classes, which means that theoretical guarantees are not by themselves reliable indicators of which policy is best for a particular problem class and empirical experiments are needed to better understand their finite time performance.  

This paper is organized as follows. In section 2, we lay out the mathematical models for Bayesian ranking and selection problems. In section 3, we describe the knowledge gradient policies.
 In section 4, we provide finite-time analyses of the knowledge gradient policy from two directions: the posterior optimality and the prior optimality.  In section 5,  we analyze the submodularity of the two-alternative case  and provide other conditions for more general problems, bringing out the issue and importance of submodularity in leaning problems. Finally, in section 6 we present finite-time performance results and analyses of various policies  for  R\&S problems.

\section{ Ranking and Selection Problems} \label{RG}
Suppose we have a collection $\mathcal{X}$ of $M$ alternatives (where $M$ might be quite large), each of which can be measured sequentially to estimate its  unknown  mean $\mu_x$. We assume normally distributed measurement noise with known variance $\sigma_W^2$.  We first introduce the model for independent normal beliefs. We begin with a normally distributed Bayesian prior belief on the sampling means that is independent across alternatives, $\mu_x \sim \mathcal{N}(\theta_x^0, \sigma_x^0)$.  At the $n$th iteration,  we use some measurement policy $\pi$ to choose one alternative $x^n$ and observe $W^{n+1}_{x^n} \sim \mathcal{N}(\mu_{x^n}, \sigma_W)$.

For convenience, we introduce the $\sigma$-algebras  $\mathcal{F}^{n}$ for any $n =0,1,...,N-1$ which is formed by the previous $n$ measurement choices and outcomes, $x^0, W^1,..., x^{n-1},W^n$.  We define $\theta^n_x=\mathbb{E}[\mu_x|\mathcal{F}^n]$ and $(\sigma^n_x)^2=\text{Var}[\mu_x|\mathcal{F}^n]$. Then conditionally on $\mathcal{F}^n$, $\mu_x \sim \mathcal{N}(\theta^n_x,\sigma^n_x)$. Let $\beta_x^n=\frac{1}{(\sigma_x^n)^2}$ be the conditional precision of $\mu_x$ and  our state of knowledge be  $S^n =(\theta^n_x,
\beta^n_x)_{x\in \mathcal{X}}$. We will use $\mathcal{F}^n$ and $S^n$ interchangeably. After the $n$th measurement we update our beliefs using Bayes' rule:
\begin{displaymath}
\theta_x^{n+1} = \left\{ \begin{array}{ll}
 \frac{\beta_x^n\theta_x^n+\beta^WW^{n+1}}{\beta^n+\beta^W} & \textrm{if $x^n=x$}\\
 \theta_x^{n} & \textrm{otherwise},
  \end{array} \right.
~~~~~~~~
\beta_x^{n+1}=\left\{ \begin{array}{ll}
 \beta^n+\beta^W & \textrm{if $x^n=x$}\\
 \beta_x^{n} & \textrm{otherwise},
  \end{array} \right.
\end{displaymath}
where $\beta^W=1/\sigma_W^2$.

We may impose correlated beliefs between alternatives in order to strengthen the effect of each measurement. Starting from a prior distribution $\mathcal{N}(\theta^0, \Sigma^0)$ and after measurement $W^{n+1}$ of alternative $x$, a posterior
distribution on the beliefs are calculated by:
\begin{align}\label{aabb}
    \theta^{n+1} &= \Sigma^{n+1}\left( \left(\Sigma^n\right)^{-1}\theta^n + \beta^W W^{n+1} e_x\right), \\ \label{bb}
    \Sigma^{n+1} &= \left( \left(\Sigma^n\right)^{-1} + \beta^W e_x e_x^T\right)^{-1},
\end{align}
where $e_x$ is the vector with 1 in the entry corresponding to alternative $x$ and 0 elsewhere. $S^n=(\theta^n, \Sigma^n)$ is then our state of knowledge in this case.

A decision function $X^{\pi}(S^n)$ is defined as a mapping from the knowledge state to $\mathcal{X}$. We refer to the decision function $X^{\pi}$ and the policy $\pi$ interchangeably.

If we are limited to $N$ measurements,  the objective  is to maximize the expected reward of the final recommended alternative:
\begin{equation} \label{offobj}
\max_{\pi \in \Pi}\mathbb{E}\left[\mu_{x^\pi}\right],
\end{equation}
where $x^\pi=\arg \max_{x\in \mathcal{X}} \theta_x^N$ and $x^n=X^{\pi}(S^n)$ for $0\le n<N$.

\section{Knowledge Gradient}
For R$\&$S problems, the knowledge gradient is a policy that at the $n$th iteration chooses its $(n+1)$st measurement from $\mathcal{X}$ to maximize the single-period expected increase in value \cite{frazier2008knowledge,frazier2009knowledge}.  To be more specific, the value of being in state $S^n$ is $ \max_{x\in \mathcal{X}} \theta_x^n$.   If we choose to measure $x^n=x$ right now, allowing us to observe $W_x^{n+1}$,  then we transition to a new state of knowledge $S^{n+1}=(\theta^{n+1}, \Sigma^{n+1})$. At iteration $n$,  $\theta_x^{n+1}$ is a random variable since we do not yet know what $W^{n+1}$ is going to be. We would like to choose $x$ at iteration $n$ which maximizes the expected value of $\max_{x \in \mathcal{X}} \theta_x^{n+1}$.
We can think of this as choosing an alternative to maximize the incremental value, given by
\begin{equation}\label{TKG}
\nu_x^{\KG,n}=\mathbb{E}[\max_{x'}\theta_{x'}^{n+1}-  \max_{x'} \theta^{n}_{x'}| x^n=x, S^n].
\end{equation}
The knowledge gradient policy  $X^{\KG}(S^{n})$ is defined by
\begin{equation} \label{kg1}
X^{\KG}(S^{n})= \arg \max_{x \in \mathcal{X}} \nu_x^{\KG,n}.
\end{equation}

The knowledge gradient policy can handle the presence of a variety of belief models such as (generalized) linear \cite{negoescu2011knowledge,Wang2016KG} or  nonparametric \cite{mes2011hierarchical, barut2013optimal}.
 
 The knowledge gradient policy has some nice properties. For Bayesian ranking and selection problems, the knowledge gradient policy is optimal (by definition) if the measurement budget $N=1$.  The knowledge gradient  is guaranteed to find the best alternative as the measurement budget $N$ tends to infinity. If there are only two choices, the knowledge gradient policy is optimal for any measurement budget. The knowledge gradient policy is the only stationary policy that is both myopically and asymptotically optimal. However, the KG has not enjoyed the finite-time bounds that have been popular in the UCB policies.

\section{Finite-time Analysis of the  Knowledge Gradient Policy}

We follow  the general structure of  the analysis of greedy approximation \cite{nemhauser1978analysis} to develop the first finite-time bound for the knowledge gradient policy for R\&S problems  as follows. In Section \ref{reduction}, by interpreting the Bayesian R\&S problems as the adaptive stochastic multi-set maximization problems, we show that the KG policy inherits precisely the performance guarantees of the greedy algorithm for classic submodular maximization problems if the utility function is adaptive submodular.  We theoretically analyze  the adaptive submodular assumption and point out that it can  fail in the ranking and selection problems. In such cases, instead of the pathwise behavior analyses of the utility function, we study its average behavior by taking expectation over the observations in Section \ref{value}. In Section \ref{bou}, we analyze the {\it {prior-optimality}} which provides another insight into the performance of the KG policy based on the submodularity of a well-understood quantity: value of information.

 It is important to note that both the submodular maximization reduction and the theoretical analyses on the prior-optimality are not limited to the specific setup of Gaussian noise in observations and Gaussian prior structure. The theoretical guarantees  are more generally applicable to any prior and measurement noise model as long as the  adaptive submodular assumption or the submodular value of information assumption holds.

\subsection{The Reduction of R\&S to Adaptive Stochastic Multi-set Maximization}\label{reduction}

We first introduce the adaptive stochastic maximization problem. Let $E$ be a finite set of items. Each item $e \in E$ maps to a random outcome of a measurement  $\Phi(e)$ in a  set $O$ of possible values. We define a realization as a function $\phi: E \mapsto O$ representing the observation of each item in the ground set. Under Bayesian interpretation, we assume that there is a known prior probability distribution $p(\phi) := \mathbb{P}(\Phi = \phi)$ over all possible realizations. The adaptive stochastic optimization problem consists of sequentially picking an item $e \in E$, revealing its outcome $\Phi(e)$ and picking the next item. After each pick, the observations so far can be represented as a partial realization $\psi$. A partial realization $\psi$ is consistent with realization $\phi$,  denoted as $\phi \sim \psi$, if all the items selected in $\psi$ have the same outcomes  as in $\phi$. We use  $\text{dom}(\psi)$ to refer to the items observed in $\psi$.  We use the notation $Z^\pi( \phi)$ to denote the set of items chosen by policy $\pi$ under realization $\phi$.

We wish to maximize some utility function  $f: 2^E \times O^E \mapsto \mathbb{R}$ that depends on which items we pick and which states they are in.  The expected utility of a policy $\pi$ is $f_{\text{avg}}(\pi) := \mathbb{E}\big[f\big(Z^\pi(\Phi),\Phi \big)\big]$ where the expectation is taken over the prior distribution $p(\phi)$. The goal of adaptive stochastic set maximization problem is to find an optimal policy $\pi^*$ that maximizes its  expected utility under a cardinality constraint,
$$ \pi^* \in \arg \max_{\pi}f_{\text{avg}}(\pi), \text{ subject to } |Z^{\pi}(\phi)| \le N,$$
where $N$ is the measurement budget. 

It is not obvious to treat the ranking and selection problem in an adaptive stochastic multi-set maximization way of thinking. To see this,  define the ground set $E=\mathcal{X}$.  The outcomes are real numbers with $O=\mathbb{R}$. Each alternative  $e=x$ can be selected multiple times. After each selection, its random outcome $\Phi(e) =W_x \in O$ is revealed. 

 Since the true values $\mu_x$ are random variables, we can let $\varphi$ be a sample realization of the truth with a (correlated) prior distribution $p(\varphi)=\mathcal{N}(\theta^0, \Sigma^0)$. We use the notation $ \phi  \in \Phi$ to denote an realization of the random observations in our problem. The prior probability distribution over the realizations is determined by $p(\varphi)$ and the noise distribution $\mathcal{N}(0, \sigma_W)$. For example, if in the ranking and selection problems each alternative can only be selected once, $\phi : E \mapsto O$.  For multi-selections, one way of defining the realization is by first making  replicas of each item to construct $E'$ and then selecting each $e' \in E'$ at most once.
 
 Consider any sampling allocation  $z = (z_x)_{x\in \mathcal{X}}$, by which we measure alternative $x$ for $z_x \in \mathbb{N}$ times. We use $Z$ to represent its corresponding multi-set.   We use $Z^\pi( \phi ): \Phi \mapsto (\mathcal{X} \times\mathbb{N}) $  to refer to the alternatives selected by $\pi$ under realization $ \phi $. Let $\theta^n$ be our vector of estimates of the means after $n$ measurements according to allocation $Z$ under realization $ \phi $, where $|Z|=n.$  $\theta^n$  can be obtained according to the updating equation \eqref{aabb} and \eqref{bb}, and does not depend on the order of the allocations. It  can thus  be denoted as $\theta^n(Z,  \phi ):  (\mathcal{X} \times\mathbb{N}) \times \Phi \mapsto \mathbb{R}^M$. The next lemma states the equivalence of $\mathbb{E}[\mu_{x^\pi}]$ and $\mathbb{E}[\max_{x}\theta_x^N]$. Hence,   the utility function $f:(\mathcal{X} \times\mathbb{N}) \times \Phi \mapsto \mathbb{R}$ can be defined as $\max_x\theta^n_x(Z,  \phi )$ and $f_{\text{avg}}(\pi) :=\mathbb{E}\Big[\max_{x}\theta_x^N\big(Z^\pi(\Phi), \Phi \big)\Big]$.  The R\&S objective \eqref{offobj}  can then be  re-written as 
 $$ \pi^* \in \arg \max_{\pi}f_{\text{avg}}(\pi), \text{ subject to } |Z^{\pi}( \phi )| \le N.$$

 \begin{lemma}[\cite{powell2012optimal}]
 Let $\pi$ be a policy, and let $x^\pi = \arg\max_x \theta_x^N$ be the alternative selected by the policy. Then $$\mathbb{E}[\mu_{x^\pi}] = \mathbb{E}[\max_{x}\theta_x^N].$$
 \end{lemma}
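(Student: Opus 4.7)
The plan is to use the tower property of conditional expectation together with the fact that $x^{\pi}$, being an $\arg\max$ of quantities known at time $N$, is $\mathcal{F}^N$-measurable. The definition $\theta_x^N = \mathbb{E}[\mu_x \mid \mathcal{F}^N]$ given in Section \ref{RG} is exactly what makes this work: the posterior mean is itself a conditional expectation of the unknown truth.

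First I would condition on $\mathcal{F}^N$ and write
\[
\mathbb{E}[\mu_{x^\pi}] = \mathbb{E}\bigl[\mathbb{E}[\mu_{x^\pi} \mid \mathcal{F}^N]\bigr].
\]
Next, since $x^\pi = \arg\max_{x\in\mathcal{X}} \theta_x^N$ is determined by $\mathcal{F}^N$, I would decompose over the finite set $\mathcal{X}$ using indicators and pull the $\mathcal{F}^N$-measurable indicators out of the inner conditional expectation:
\[
\mathbb{E}[\mu_{x^\pi} \mid \mathcal{F}^N] = \sum_{x \in \mathcal{X}} \mathbf{1}_{\{x^\pi = x\}}\, \mathbb{E}[\mu_x \mid \mathcal{F}^N] = \sum_{x \in \mathcal{X}} \mathbf{1}_{\{x^\pi = x\}}\, \theta_x^N = \theta_{x^\pi}^N.
\]
By the definition of $x^\pi$, the last expression equals $\max_{x \in \mathcal{X}} \theta_x^N$. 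Taking an outer expectation then yields $\mathbb{E}[\mu_{x^\pi}] = \mathbb{E}[\max_x \theta_x^N]$, as claimed.

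There is essentially no obstacle here beyond keeping the measurability bookkeeping straight: one must be explicit that the policy $\pi$ generates a $x^\pi$ depending only on the observations $x^0, W^1, \dots, x^{N-1}, W^N$ that generate $\mathcal{F}^N$, and that in the correlated-belief case the same argument applies componentwise since $\theta^N$ is still the conditional mean vector of $\mu$ given $\mathcal{F}^N$. No integrability issue arises because we assumed Gaussian priors and noise, so $\mu_x$ is in $L^1$ and the conditional expectation is well-defined. The whole argument is a couple of lines; the content is that maximizing the posterior mean $\theta_x^N$ is the Bayes-optimal terminal decision rule under the linear loss $-\mu_x$.
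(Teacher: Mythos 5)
Your proof is correct and is exactly the standard argument: the paper states this lemma without proof (citing \cite{powell2012optimal}), and the cited source proves it the same way, via the tower property together with the $\mathcal{F}^N$-measurability of $x^\pi$ so that $\mathbb{E}[\mu_{x^\pi}\mid\mathcal{F}^N]=\theta^N_{x^\pi}=\max_x\theta^N_x$. Nothing is missing; your remarks on measurability and integrability are the right points to flag.
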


The definition of the knowledge gradient $\nu_x^{KG,n}$ coincides with the {\it {Conditional Expected Marginal Benefit}} $\Delta(e|\psi)$ defined by \cite{golovin2010adaptive}: $$\Delta(e|\psi) := \mathbb{E}\Big[ f\big(\text{dom}(\psi) \cup \{e\}, \Phi \big) - f\big(\text{dom}(\psi),\Phi\big)|\Phi \sim \psi \Big].$$ The knowledge gradient policy is thus in fact the adaptive greedy policy with uniform item costs, with a slight difference in the ability of selecting each item more than once. We generalize the definition of adaptive monotonicity and adaptive submodularity for set functions given  by  \cite{golovin2010adaptive} to multi-set functions as follows.

\begin{definition}[Adaptive Monotonicity] A function $f:(\mathcal{X} \times\mathbb{N}) \times \Phi \mapsto \mathbb{R}$ is adaptive monotone with respect to distribution $p( \phi )$ if the conditional expected marginal benefit of any item is nonnegative: for all $\psi$ and all $x \in \mathcal{X}$.
 \begin{equation*}
 \Delta(x|\psi) \ge 0.
 \end{equation*}
\end{definition}

\begin{definition}[Adaptive Submodularity] A function $f:(\mathcal{X} \times\mathbb{N}) \times \Phi \mapsto \mathbb{R}$ is adaptive submodular with respect to distribution $p( \phi )$ if for all $\psi$ and $\psi'$ such that $\text{dom}(\psi) \subseteq \text{dom}(\psi')$ and both $\psi, \psi'$ are consistent with some realization $ \phi $ (i.e. $\psi \subseteq \psi'$), we have the conditional expected marginal benefit of any fixed item $x \in \mathcal{X}$  does not increase as more items are selected and observed,
\begin{equation*}
\Delta(x|\psi) \ge \Delta(x|\psi').
\end{equation*}
\end{definition}

Let $\pi^*$ be the optimal policy to R\&S problems. If $f:= \max_x \theta^n_x(Z,  \phi )$ is adaptive monotone and adaptive submodular with respect to the prior distribution $p( \phi )$, then 
$$f_\text{avg}(\text{KG}) > (1- e^{-1})f_\text{avg}(\pi^*).$$

We next show that the instances generated by ranking and selection problems are adaptive monotone.
\begin{lemma} In ranking and selection problems,  the utility function $\max_x\theta_x$ is adaptive monotone with any Gaussian prior.
\end{lemma}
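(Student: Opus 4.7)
The plan is to observe that adaptive monotonicity reduces to showing $\mathbb{E}[\max_{x'}\theta_{x'}^{n+1}\mid \mathcal{F}^n] \ge \max_{x'}\theta_{x'}^n$, which in turn follows from a two-line martingale plus Jensen argument. Specifically, I would first unpack the definition of $\Delta(x\mid\psi)$ in the R\&S language: identifying the partial realization $\psi$ with the filtration $\mathcal{F}^n$ and using the utility $f = \max_{x'}\theta^{n}_{x'}(Z,\phi)$, the quantity $\Delta(x\mid\psi)$ is exactly the knowledge gradient $\nu_x^{\KG,n}$ defined in equation~\eqref{TKG}, i.e.
\begin{equation*}
\Delta(x\mid\psi) \;=\; \mathbb{E}\bigl[\max_{x'}\theta_{x'}^{n+1} - \max_{x'}\theta_{x'}^{n}\,\bigl|\,x^n=x,\,S^n\bigr].
\end{equation*}
So the claim to prove is simply $\nu_x^{\KG,n}\ge 0$ for every $x$ and every knowledge state $S^n$.

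Next I would use the tower property of conditional expectation. For each fixed alternative $x'$, the sequence $\theta_{x'}^n = \mathbb{E}[\mu_{x'}\mid\mathcal{F}^n]$ is a martingale with respect to $\{\mathcal{F}^n\}$, so
\begin{equation*}
\mathbb{E}\bigl[\theta_{x'}^{n+1}\,\bigl|\,\mathcal{F}^n\bigr] \;=\; \theta_{x'}^n.
\end{equation*}
This step holds with or without correlated beliefs, and is where I implicitly (but very weakly) use the Gaussian prior, only in the sense that the updating rules in \eqref{aabb}--\eqref{bb} produce the correct posterior mean; in fact the argument is prior-free as long as the posterior means form a martingale.

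Finally I would apply Jensen's inequality to the convex function $g(v) = \max_{x'} v_{x'}$ on $\mathbb{R}^M$, conditional on $\mathcal{F}^n$ and on the choice $x^n=x$:
\begin{equation*}
\mathbb{E}\bigl[\max_{x'}\theta_{x'}^{n+1}\,\bigl|\,x^n=x,\,S^n\bigr] \;\ge\; \max_{x'}\mathbb{E}\bigl[\theta_{x'}^{n+1}\,\bigl|\,x^n=x,\,S^n\bigr] \;=\; \max_{x'}\theta_{x'}^{n}.
\end{equation*}
Subtracting $\max_{x'}\theta_{x'}^n$ from both sides yields $\Delta(x\mid\psi)\ge 0$, which is the required adaptive monotonicity.

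There is really no hard step here: the proof is essentially a single application of Jensen's inequality to the martingale of posterior means. The only subtlety worth being careful about is that adaptive monotonicity quantifies over \emph{all} partial realizations $\psi$, not only those of length $n$ reachable under some specific policy; but since the updating formulas \eqref{aabb}--\eqref{bb} define $\theta^{n+1}$ as a conditional expectation regardless of how $\psi$ was generated, and the martingale property depends only on the Bayesian update being correct, the argument above applies uniformly to every $\psi$.
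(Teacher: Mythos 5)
Your proof is correct and follows essentially the same route as the paper: identify $\Delta(x\mid\psi)$ with $\nu_x^{\KG,n}$, establish $\mathbb{E}[\theta_{x'}^{n+1}\mid x^n=x,\mathcal{F}^n]=\theta_{x'}^n$, and conclude by Jensen's inequality applied to the max. The only (immaterial) difference is that the paper derives the martingale property from the explicit Gaussian update $\theta^{n+1}=\theta^n+\tilde{\sigma}(\Sigma^n,x^n)Z^{n+1}$, whereas you invoke the tower property of posterior means directly, which, as you note, makes the argument essentially prior-free.
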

\begin{proof} For any $\psi$, let $n=|\psi|$. Then for any item $x \in \mathcal{X}$,  $\Delta(x|\psi)$ can be rewritten as $\mathbb{E}[\max_{x'}\theta^{n+1}_{x'} - \max_{x'}\theta^{n}_{x'} | x^{n}=x, \mathcal{F}^n]=\nu_x^{\text{KG},n}$. Since for any $x$, $\theta^{n+1}_x=\theta^{n}_x+\tilde{\sigma}(\Sigma^n, x^n)Z^{n+1}$, where $\tilde{\sigma}(\Sigma,x)=\frac{\Sigma e_x}{\sqrt{1/\beta^W+\Sigma_{xx}}}$ and the random variable $Z^{n+1}$ is standard normal when conditioned on $\mathcal{F}^n$ \cite{frazier2009knowledge}. Hence we have $\mathbb{E}[\theta^{n+1}_{x'}|x^{n}=x, \mathcal{F}^n] =\theta^{n}_{x'}$ for any $x'$. By Jensen's inequality, we have  $\Delta(x|\psi)=\nu_x^{\text{KG},n} \ge 0.$
\end{proof}
Even though intuition suggests that the utility function should be adaptive submodular in the amount of information collected,  as we collect more information it is natural to expect that the marginal value
of this information should decrease, yet it is not always the case as shown in the next lemma.  The proof can be found in Appendix \ref{A}.

\begin{lemma} \label{conL}
For any independent normal prior distribution $p(\varphi)$ and nondegenerated noise distribution (i.e. $\sigma^W \neq 0$), there exists $\psi$, $\psi'$ and $x \in \mathcal{X}$ such that  $\psi \subseteq\psi'$ and $\Delta(x|\psi) < \Delta(x|\psi').$ 
\end{lemma}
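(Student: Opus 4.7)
The plan is to exhibit an explicit counterexample, exploiting the behavioral fact that the single-step value of measuring an alternative in the max-utility R\&S problem is large when the top two posterior means are close together and small when the current leader has been pulled far out in front of the runner-up. The construction is driven by only two alternatives $x_1, x_2 \in \mathcal{X}$; without loss of generality I assume $M \ge 2$ and that at least two alternatives have strictly positive prior variance (otherwise $\Delta \equiv 0$ and there is nothing to prove).

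First, I would derive a closed-form expression for $\Delta(x_1 \mid \psi)$ in the two-alternative subcase. Writing $d = \theta_1^n - \theta_2^n$ and $\tilde\sigma_1 = (\sigma_1^n)^2 / \sqrt{\sigma_W^2 + (\sigma_1^n)^2}$, the identity $\max(a,b) = b + (a-b)^+$ combined with the fact (already used in Lemma 4.3) that $\theta_1^{n+1} \mid \mathcal{F}^n \sim \mathcal{N}(\theta_1^n, \tilde\sigma_1^2)$ yields, after a direct Gaussian calculation,
\begin{equation*}
\Delta(x_1 \mid \psi) \;=\; \tilde\sigma_1 \cdot h\!\left(|d|/\tilde\sigma_1\right),
\end{equation*}
where $h \colon [0,\infty) \to (0,\infty)$ is the strictly decreasing function with $h(0) = 1/\sqrt{2\pi}$ and $h(z) \to 0$ as $z \to \infty$ (the limit follows from Mill's ratio for the Gaussian tail). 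The crucial qualitative feature is that $\Delta(x_1 \mid \psi)$ is large when $|d|$ is small and decays to zero when $|d|$ is large, independently of how many measurements have already been collected.

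Next, I would build $\psi \subseteq \psi'$ that sweep $|d|$ from large back to zero without altering $\tilde\sigma_1$. Pick $x_1, x_2 \in \mathcal{X}$ with positive prior variance. Let $\psi$ consist of (i) one measurement of each alternative $x \notin \{x_1, x_2\}$, with observed value chosen so that its posterior mean is pushed far below the prior means of both $x_1$ and $x_2$, and (ii) one measurement of $x_1$ with observed value chosen so that $\theta_1^{(\psi)} - \theta_2^0 = D$ for a constant $D$ to be fixed. Let $\psi'$ extend $\psi$ by one additional measurement of $x_2$ with observed value chosen so that $\theta_2^{(\psi')} = \theta_1^{(\psi)}$. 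Each of these observation values can be solved for in $\mathbb{R}$ using the Bayes update, because the hypothesis $\sigma_W \neq 0$ gives $\beta^W > 0$. By construction the alternatives outside $\{x_1,x_2\}$ are dominated throughout, measuring $x_2$ leaves the precision of $\theta_1$ unchanged, and so the two-alternative formula applies with the same $\tilde\sigma_1$ in both states. Plugging in, $\Delta(x_1 \mid \psi') = \tilde\sigma_1/\sqrt{2\pi}$, a strictly positive constant independent of $D$, whereas $\Delta(x_1 \mid \psi) = \tilde\sigma_1 \cdot h(D/\tilde\sigma_1) \to 0$ as $D \to \infty$. Choosing $D$ sufficiently large forces $\Delta(x_1 \mid \psi) < \Delta(x_1 \mid \psi')$, as required.

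The main obstacle is a bookkeeping check rather than any hard analytic step: one must verify that $\psi$ and $\psi'$ are legitimate partial realizations (they are, since Gaussian noise with $\sigma_W \neq 0$ has full support on $\mathbb{R}$, so each prescribed observation has positive conditional density), that $\psi \subseteq \psi'$ (true by construction because $\psi'$ only appends a single measurement to $\psi$), and that the dominated alternatives outside $\{x_1,x_2\}$ perturb the two-alternative formula only by a term that vanishes exponentially fast once their posterior means are driven far enough below (which follows from the exponentially thin tails of $\theta_1^{n+1}$). The only quantitative input needed is the asymptotic $h(z) \to 0$, which is standard.
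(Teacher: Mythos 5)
Your proposal is correct and follows essentially the same route as the paper's proof: both invoke the closed-form expression $\Delta(x_1\mid\psi)=\tilde\sigma_{x_1}f\bigl(-|\theta_{x_1}-\theta_{x_2}|/\tilde\sigma_{x_1}\bigr)$ with $f(\zeta)=\zeta\Phi(\zeta)+\phi(\zeta)$ nondecreasing, and both obtain $\psi'$ from $\psi$ by appending one observation of the runner-up whose value (realizable with positive probability since $\sigma_W\neq 0$) pulls its posterior mean toward the leader, thereby increasing the leader's knowledge gradient. The only cosmetic difference is that you drive the initial gap $D\to\infty$ and close it exactly, whereas the paper fixes $\psi$ and exhibits the explicit interval of observations $W_2$ for which the inequality holds.
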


It can be seen that the adaptive submodular assumption  can fail in the ranking and selection problems with the special utility function $f=\max_{x} \theta_x^n(Z, \phi )$ that involves  maximization  itself. Hence, instead of the above pathwise behavior analyses of the utility function, we would like to study its average behavior by taking the expectation over the observations given any fixed sample allocation $Z$ in the next section.

\subsection {The Value of Information} \label{value}
We define the pathwise value of information $\hat{v}(Z, \phi )$ as the incremental improvement over the best expected value that can be obtained without measurement, which is $\max_{x  \in \mathcal{X}}\theta^0_{x}$, $$\hat{v}(Z, \phi ) := \max_{x \in \mathcal{X}}\theta_x^n(Z, \phi )-\max_{x  \in \mathcal{X}}\theta^0_{x}.$$    The value of information $v(Z)$ is then defined to be 
\begin{eqnarray*}
v(Z):= \mathbb{E}_\Phi[\hat{v}(Z, \Phi)], 
\end{eqnarray*}
where the expectation is taken over the prior distribution  $p(\phi)$. 

The value of information has a long history spanning
the literatures of several disciplines. Stigler considers the value of information in economics when buyers search for the best price  \cite{stigler1961economics}.   Howard   laid the groundwork for the value of information in a decision-theoretic context and spawned a great deal of work in this area \cite{howard1966information}. Yokota and Thompson gives a first comprehensive review of value
of information analyses related to health risk management \cite{yokota2004value}.   Raiffa and Schlaifer  poses the Bayesian R\&S problem and defines the associated value of information  \cite{raiffa1961applied}, which marked the
beginning of a number of literature on
the value of information within Bayesian R\&S and the budgeted learning problem \cite{guttman1964bayesian,kapoor2005learning, chen1996gradient, chick2001new, frazier2008knowledge}.

Since the value of information is a multi-set function,  we first generalize the definitions and properties of submodular set functions described by \cite{nemhauser1978analysis} to submodular multi-set functions.
\begin{definition}
Given a finite set $E$, a real-valued function $g$ on the set of multi-sets over $E$ is called submodular if for all multi-sets $S$ and $T$ whose elements belong to $E$,
$$\rho_x{(S)} \geq \rho_x{(T)}, \forall S \subseteq T, \forall x \in E,$$
where $\rho_x(S)\triangleq g(S \cup \{x\})-g(S)$ is the incremental value of adding element $x$ to the multi-set $S$.

\end{definition}

\begin{prop}\label{a1}
Each of the following statements is equivalent and defines a submodular multi-set function ($S$ pathwiseand $T$ are multi-sets on $E$, $x$, $y \in E$):
\begin{enumerate}
\item $\rho_x{(S)} \geq \rho_x{(T)}, \forall S \subseteq T$ and $\forall x $.
\item $\rho_x(S) \geq \rho_x(S \cup \{y\}), \forall S, x, y$.
\item $g(T)\leq g(S)+ \sum_{x \in T-S} \rho_x(S) - \sum_{x \in S-T} \rho_x(S \cup T-\{x\}), \forall S,T$.
\item $g(T) \leq g(S) + \sum_{x\in T-S} \rho_x(S), \forall S \subseteq  T$.
\end{enumerate}
\end{prop}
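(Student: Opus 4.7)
The plan is to prove the four statements equivalent by establishing the cycle (1) $\Leftrightarrow$ (2), (1) $\Rightarrow$ (3) $\Rightarrow$ (4) $\Rightarrow$ (2), closely following the classical argument of \cite{nemhauser1978analysis} but adapting every set-theoretic operation to multi-sets (so that $T\cup\{x\}$ adds one copy of $x$ and $T-\{x\}$ removes one copy).  Throughout, I write $T-S$ for the multi-set difference and treat $S\subseteq T$ as multiplicity-wise inclusion, so that $\rho_y(S\cup \{y_1,\ldots,y_{i-1}\})$ is well-defined even when the $y_i$ repeat.

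First I would dispatch (1) $\Leftrightarrow$ (2).  The direction (1) $\Rightarrow$ (2) is immediate by choosing $T = S\cup\{y\}$.  For (2) $\Rightarrow$ (1), I enumerate the multi-set $T - S = \{y_1,\ldots,y_k\}$ and chain the inequalities $\rho_x(S)\ge \rho_x(S\cup\{y_1\})\ge\cdots\ge\rho_x(S\cup\{y_1,\ldots,y_k\}) = \rho_x(T)$, each step being an instance of (2).

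Next I would show (1) $\Rightarrow$ (3) by going from $S$ to $T$ along the ``bridge'' $S\cup T$.  Going up from $S$ to $S\cup T$ by adding the elements of $T-S=\{y_1,\ldots,y_k\}$ one at a time telescopes $g(S\cup T)-g(S) = \sum_{i=1}^{k}\rho_{y_i}(S\cup\{y_1,\ldots,y_{i-1}\})$, and each term is $\le \rho_{y_i}(S)$ by (1), giving $g(S\cup T)\le g(S)+\sum_{y\in T-S}\rho_y(S)$.  Going up from $T$ to $S\cup T$ by adding the elements of $S-T=\{z_1,\ldots,z_l\}$ one at a time gives $g(S\cup T)-g(T) = \sum_{j=1}^{l}\rho_{z_j}(T\cup\{z_1,\ldots,z_{j-1}\})$; since $T\cup\{z_1,\ldots,z_{j-1}\} \subseteq (S\cup T)-\{z_j\}$, property (1) bounds each term from below by $\rho_{z_j}((S\cup T)-\{z_j\})$.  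Subtracting the two bounds gives (3).  The step (3) $\Rightarrow$ (4) is then trivial: when $S\subseteq T$ the multi-set $S-T$ is empty, so the last sum in (3) vanishes.

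The one remaining link is (4) $\Rightarrow$ (2), and this is the step I expect to be the slightly subtle one, because a naive induction on $|T-S|$ degenerates into an equality in its base case.  The trick I would use is to apply (4) to the pair $S\subseteq S\cup\{x,y\}$, which yields $g(S\cup\{x,y\})\le g(S)+\rho_x(S)+\rho_y(S)$, and then to note that by definition $g(S\cup\{x,y\}) = g(S)+\rho_y(S)+\rho_x(S\cup\{y\})$.  Comparing these two expressions gives $\rho_x(S\cup\{y\})\le \rho_x(S)$, which is exactly (2).  Combined with the previously established (2) $\Rightarrow$ (1), this closes the cycle and establishes the equivalence of all four statements.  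The main care throughout is bookkeeping with multi-set inclusion and multi-set difference, particularly in the telescoping arguments of Step~2; once those are handled correctly the arithmetic is the same as in the set-function case.
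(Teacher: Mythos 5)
Your proposal is correct and follows essentially the same route as the paper: the cycle $(2)\Rightarrow(1)\Rightarrow(3)\Rightarrow(4)\Rightarrow(2)$, with the chaining argument for $(2)\Rightarrow(1)$, the double telescoping through $S\cup T$ for $(1)\Rightarrow(3)$, and the substitution $T=S\cup\{x,y\}$ for $(4)\Rightarrow(2)$. The only (harmless) difference is that you additionally note the immediate implication $(1)\Rightarrow(2)$, which is redundant once the cycle is closed.
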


This proposition follows from a similar  proof of Proposition 2.1 in \cite{nemhauser1978analysis}. For completeness we provide the proof in Appendix \ref{B}.

It is obvious that if $\theta^n_x(Z,  \phi )$ is adaptive monotone or adaptive submodular with respect to $p( \phi )$, then so does $\hat{v}(Z,  \phi )$. It is also easy to show that if $\theta^n_x(Z,  \phi )$ is adaptive monotone or adaptive submodular with respect to $p( \phi )$, then by the law of total expectation, i.e. $\mathbb{E}[\mathbb{E}[U|V]]=\mathbb{E}[U]$ for any random variables $U$ and $V$, the value of information $v(Z)$ is monotone or submodular. We close this section by showing the monotonicity of the multi-set function $v$ and leave the analyses of submodularity in Section \ref{asvi}.
\begin{lemma}\textup{\textbf{(Monotonicity of the value of information)}} 

For any sampling allocation $Z_1$ and $Z_2$, if $Z_1 \subseteq Z_2$, then $v(Z_1) \leq v(Z_2)$.
\end{lemma}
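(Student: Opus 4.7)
The plan is to reduce monotonicity of $v$ to the martingale property of Bayesian posterior means combined with Jensen's inequality applied to the convex function $\max$. Since the constant $\max_{x}\theta_x^0$ cancels in $v(Z_2)-v(Z_1)$, it suffices to show
\[
\mathbb{E}_\Phi\bigl[\max_x \theta_x^{n_1}(Z_1,\Phi)\bigr] \;\le\; \mathbb{E}_\Phi\bigl[\max_x \theta_x^{n_2}(Z_2,\Phi)\bigr],
\]
where $n_i = |Z_i|$. Because the posterior described by \eqref{aabb}--\eqref{bb} is invariant under permutations of the observations (Bayes updates commute for conditionally independent measurements), we may view the measurements of $Z_2$ as being carried out in two stages: first the $n_1$ measurements contained in $Z_1$, then the additional $n_2-n_1$ measurements in the multi-set difference $Z_2\setminus Z_1$.

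First, I would formalize this two-stage processing and introduce the $\sigma$-algebra $\mathcal{F}^{n_1}$ generated by the observations corresponding to $Z_1$. The key structural fact I would invoke is the tower property of conditional expectation together with the classical martingale identity for Bayesian updating: for every alternative $x\in\mathcal{X}$,
\[
\mathbb{E}\bigl[\theta_x^{n_2}(Z_2,\Phi)\,\big|\,\mathcal{F}^{n_1}\bigr] \;=\; \theta_x^{n_1}(Z_1,\Phi).
\]
This follows from the fact that $\theta_x^n = \mathbb{E}[\mu_x\mid\mathcal{F}^n]$, so $(\theta_x^n)_n$ is a martingale in $n$ under the prior $p(\varphi)$, and adding the observations indexed by $Z_2\setminus Z_1$ simply advances $\mathcal{F}^{n_1}$ to $\mathcal{F}^{n_2}$.

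Second, because $x\mapsto \max_x \theta_x$ is a convex function of the vector $\theta\in\mathbb{R}^M$, the conditional Jensen inequality gives
\[
\max_{x}\theta_x^{n_1}(Z_1,\Phi) \;=\; \max_{x}\mathbb{E}\bigl[\theta_x^{n_2}(Z_2,\Phi)\,\big|\,\mathcal{F}^{n_1}\bigr] \;\le\; \mathbb{E}\Bigl[\max_{x}\theta_x^{n_2}(Z_2,\Phi)\,\Big|\,\mathcal{F}^{n_1}\Bigr].
\]
Taking expectations on both sides, applying the tower property, and subtracting $\max_x\theta_x^0$ from both sides yields $v(Z_1)\le v(Z_2)$, completing the argument.

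The only subtlety I anticipate is being explicit about the multi-set semantics, i.e.\ that $Z_1\subseteq Z_2$ in the multi-set sense gives a well-defined decomposition $Z_2 = Z_1 \uplus (Z_2\setminus Z_1)$, and that $\theta^n(Z,\phi)$ is genuinely order-independent so that the two-stage Bayesian update is consistent with $\theta^{n_2}(Z_2,\phi)$. Once these bookkeeping details are settled, the martingale-plus-Jensen step is the entire proof; no submodularity or structural property of the prior beyond the existence of conditional expectations is needed.
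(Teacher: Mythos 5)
Your proof is correct, and it rests on exactly the same engine as the paper's: the martingale property of the posterior means together with the convexity of $\max$. The packaging differs slightly, though. The paper telescopes one measurement at a time — it writes $v(Z\cup\{x^{n+1}\})-v(Z)=\mathbb{E}_\Phi[\nu_x^{\KG,n}]$ via the tower property and then invokes its separately proved adaptive monotonicity lemma ($\nu_x^{\KG,n}\ge 0$), whose proof uses the Gaussian representation $\theta^{n+1}=\theta^n+\tilde\sigma(\Sigma^n,x^n)Z^{n+1}$ before applying Jensen. You instead condition once on $\mathcal{F}^{n_1}$ and handle the entire increment $Z_2\setminus Z_1$ in a single conditional-Jensen step, using only $\theta_x^n=\mathbb{E}[\mu_x\mid\mathcal{F}^n]$ and the order-independence of the Bayesian update. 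What your route buys is self-containedness and generality: it does not pass through the knowledge-gradient quantity at all and works for any prior and noise model admitting conditional expectations, which is in the spirit of the paper's remark that its guarantees are not tied to the Gaussian setup. What the paper's route buys is reuse — the single-step difference is literally the KG value, so monotonicity of $v$ falls out of adaptive monotonicity already on the shelf. The bookkeeping caveat you flag (multi-set decomposition $Z_2=Z_1\uplus(Z_2\setminus Z_1)$ and order-invariance of $\theta^n(Z,\phi)$) is indeed the only thing to make explicit, and the paper already asserts that $\theta^n$ does not depend on the order of the allocations, so nothing is missing.
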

\begin{proof}
We prove the monotonicity of $v$ by showing $v(Z) \leq v(Z\cup \{x^{n+1}\})$ for any allocation $Z$  (with $\sum_{x \in \mathcal{X}} z_x=n$) and any additional measurement $x^{n+1}$. By the tower property,
\begin{eqnarray*}
v(Z\cup \{x^{n+1}\})-v(Z) 
&=&\mathbb{E}_\Phi[\hat{v}(Z\cup \{x^{n+1}\})]-\mathbb{E}[\hat{v}(Z)]\\
&=&\mathbb{E}_\Phi[\max_{x \in \mathcal{X}}\theta^{n+1}_x(Z\cup \{x^{n+1}\})] - \mathbb{E}[\max_{x \in \mathcal{X}}\theta^n_x(Z)] \\
&=&\mathbb{E}_\Phi[\mathbb{E}[\max_x\theta^{n+1}_x(Z\cup \{x^{n+1}\})- \max_x\theta^{n}_x(Z) | \Phi \sim \psi_Z]]\\
&=&\mathbb{E}_\Phi[\nu_x^{\KG,n}],
\end{eqnarray*}
where $\psi_Z$ is the partial realization with $\text{dom}(\psi_Z)=Z$. The lemma follows from the adaptive monotonicity, $\nu_x^{\KG,n} \geq 0$.
\end{proof}

\subsection{Guarantees on the Prior-optimality of the  Knowledge Gradient Policy}\label{bou}
There are two ways to evaluate the value of a policy.  The first, which we call the {\it posterior view}, conditions on the allocation $Z=Z^\pi(\Phi)$ that would have occurred under policy $\pi$ for each sample path $\phi\in\Phi$.  This is the more conventional approach for evaluating policies.  The second, which we call the {\it prior view}, starts by characterizing the value of an arbitrary allocation $Z$ (before we have seen any sample realizations).

More formally, the classical way to estimate the value of a policy is to calculate the incremental improvement over what we could do before we collect any information, is given by
 \begin{eqnarray}\nonumber
f'_{\text{avg}}(\pi)&=&\mathbb{E}[f(Z^{\pi}(\Phi), \Phi)]-\max_x \theta_x^0. 
\end{eqnarray}
We let $\mathbb{P}(\pi \leadsto Z)$ be the probability that policy $\pi$ produces allocation $Z$.   Since with a fixed budget of $N$ measurements, there are only finite choices of possible allocations, using the tower property, we can condition on the allocation $Z^\pi=Z$ which gives us
 \begin{eqnarray}\nonumber
f'_{\text{avg}}(\pi)&=&\sum_{Z\in\mathcal{Z}^N}\mathbb{P}(\pi \leadsto Z)\bigg(\mathbb{E}[ \max_x \theta^n_x(Z^{\pi}(\Phi), \Phi)|Z^{\pi}=Z]-\max_x\theta_x^0\bigg). \label{post}
\end{eqnarray}
We note that in this method for evaluating a policy (which is the standard method), we only consider allocations $Z$ that are actually produced by policy $\pi$ for the outcomes in $\phi$.  This approach makes it much more difficult to understand the relationship between the allocation $Z$ and the value of a policy.

For this reason, we adopt a different method of evaluating a policy which we term the {\it prior view}.  Since this idea is new, we define it formally as follows
\begin{definition}[The prior-value of a policy]
Let $\mathcal{Z}^n$ be the set of all possible allocations with a limited budget $n$. The value of a policy ${\pi}$ with $N$ measurements is defined as
\begin{eqnarray*}
F^{\pi}&=&\sum_{Z\in \mathcal{Z}^N}\mathbb{P}(\pi \leadsto Z) \bigg(\mathbb{E}_{\Phi}[\max_x \theta^n_x(Z,\Phi)]-\max_x\theta_x^0\bigg)\\
&=&\sum_{Z\in \mathcal{Z}^N}\mathbb{P}(\pi \leadsto Z) v(Z).
\end{eqnarray*}

\end{definition}
In this view, we use the prior probability of an outcome $p(\phi)$ instead of the posterior $p(\phi|Z^{\pi}(\phi)=Z)$ which is conditioned on an allocation $Z$.  The value of this approach is that it writes the value of a policy directly as a function of $v(Z)$, making it easier to study the effect of the properties of $v(Z)$ on the value of a policy.  Intuitively, since a policy could generate different allocations $Z$ for different sample realizations, it is natural to define the value of a policy $\pi$ as the weighted sum of the expected value of information based on all possible allocations $Z$ and the weight should be the probability of occurrence of $Z$ based on policy $\pi$. 

We make the following assumption which is weaker than the adaptive submodularity assumption  and will analyze it  further in Section \ref{asvi}.

\begin{assumption}\label{Ass} The value of information $v$ is a submodular multi-set function on the set of alternatives $\mathcal{X}$ with respect to the prior distribution $p( \phi )$.
\end{assumption}
 
Let $\pi^*$ be the  optimal sequential policy under a budget of $N$ measurements in the sense that the prior-value of $\pi^*$ is the largest. We call it {\it{prior-optimality}}. In what follows,  we first bound KG's sub-{\it {prior-optimality}} in Proposition  \ref{4}:
\begin{equation*}\label{subo}
F^{\pi^*}  \le F^{\KG^{[n]} @ \pi^*} \le F^{\KG^{[n-1]}}+ N (F^{\KG^{[n]}}-F^{\KG^{[n-1]}}),~n=1,2,...,N.
 \end{equation*}
Then we
derive the worst-case bound for the KG policy in Theorem \ref{theo}:
\begin{eqnarray*}
&&\frac{F^{KG}}{F^{\pi^*}}
\geq 1-(\frac{N-1}{N}) ^N \geq \frac{e-1}{e} \approx 0.632.
\end{eqnarray*}

Besides the {\it{posterior optimality}} bound obtained from adaptive stochastic multi-set maximization, 
the {\it {prior-optimality}} provides another insight into the performance of the KG policy based on a well-understood quantity: value of information.

\begin{definition}[Policy concatenation]\cite{golovin2010adaptive} A concatenated policy $\pi=\pi_1 @ \pi_2$ is constructed by running $\pi_1$ to completion, and then running policy $\pi_2$  from a fresh start ignoring all the information collected while running $\pi_1$.

\end{definition}

To be more specific, suppose $\pi_i$ has a budget of $n_i$, $i=1,2$, the first phase is to run $\pi_1$ for $n_1$ iterations starting from $S^0$ and we get a sample realization including decisions and their corresponding measurements.
The second phase is to run $\pi_2$ for $n_2$ measurements starting from $S^0$ and we get another sample realization.
Thus the sample realization of the concatenated process is all the  decisions and their corresponding measurements collected in two phases.
Note here, when running the second policy, we ignore all the information collected during running the first one, but when calculating the value of $\pi_1@\pi_2$, $F^{\pi_1 @\pi_2}$, we use all the information  collected in two phases.

\begin{definition}[Policy truncation] \cite{golovin2010adaptive} For a policy $\pi$,  define the j-truncation $\pi^{[j]}$  of $\pi$ as the policy that runs exactly $(j+1)$ steps under $\pi$'s decision rule and $\pi^{\{j\}}$ as the single step policy that randomly chooses an alternative according to the probability distribution of policy $\pi$'s decision for the $(j+1)$-th step. 
\end{definition}

 We now show that the value of $\pi_1$ is no larger than the value of $\pi_1 @ \pi_2$.
\begin{lemma} \label{mon}
 $F^{\pi_1} \le F^{\pi_2@\pi_1}$ for all policies $\pi_1$ and $\pi_2$ under any prior and probability distribution that
describes a measurement.
\end{lemma}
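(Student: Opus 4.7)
The plan is to reduce the inequality to the monotonicity of the value of information $v$, established in the previous lemma, by unpacking the definition of the prior-value of a policy and exploiting the structural fact that concatenation only enlarges the allocation.

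First I would rewrite both sides using the definition of the prior-value:
\begin{align*}
F^{\pi_1} &= \sum_{Z \in \mathcal{Z}^{n_1}} \mathbb{P}(\pi_1 \leadsto Z)\, v(Z), \\
F^{\pi_2 @ \pi_1} &= \sum_{Z \in \mathcal{Z}^{n_1+n_2}} \mathbb{P}(\pi_2 @ \pi_1 \leadsto Z)\, v(Z).
\end{align*}
The key structural observation is that, because $\pi_2 @ \pi_1$ runs $\pi_2$ to completion and then runs $\pi_1$ from a fresh start while ignoring the observations collected during $\pi_2$, the allocation produced by $\pi_2 @ \pi_1$ is exactly the multi-set union of the allocation produced by $\pi_2$ and an independently sampled allocation from $\pi_1$. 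In particular, $\pi_1$'s choices in the second phase are governed by its own fresh internal state, so its allocation has the same marginal distribution as running $\pi_1$ alone, and is probabilistically independent of $\pi_2$'s allocation. This gives the factorization
\[
\mathbb{P}(\pi_2 @ \pi_1 \leadsto Z_2 \cup Z_1) = \mathbb{P}(\pi_2 \leadsto Z_2)\,\mathbb{P}(\pi_1 \leadsto Z_1).
\]

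Plugging this in and applying the monotonicity of $v$ (since $Z_1 \subseteq Z_2 \cup Z_1$ as multi-sets), we get
\begin{align*}
F^{\pi_2 @ \pi_1}
&= \sum_{Z_1}\sum_{Z_2} \mathbb{P}(\pi_2 \leadsto Z_2)\,\mathbb{P}(\pi_1 \leadsto Z_1)\, v(Z_2 \cup Z_1) \\
&\ge \sum_{Z_1}\sum_{Z_2} \mathbb{P}(\pi_2 \leadsto Z_2)\,\mathbb{P}(\pi_1 \leadsto Z_1)\, v(Z_1) \\
&= \sum_{Z_1} \mathbb{P}(\pi_1 \leadsto Z_1)\, v(Z_1) \cdot \underbrace{\sum_{Z_2} \mathbb{P}(\pi_2 \leadsto Z_2)}_{=1} \; = \; F^{\pi_1}.
\end{align*}

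The main obstacle, and the step that needs the most care, is justifying the factorization in the middle paragraph: making precise that running $\pi_1$ with a reset knowledge state produces a random allocation that is (i) distributionally identical to running $\pi_1$ standalone and (ii) independent of the allocation produced by $\pi_2$. Once this is clear from the definition of policy concatenation, the remainder is a one-line application of the monotonicity lemma on $v$, and no assumption beyond adaptive monotonicity (and in particular no submodularity of $v$) is needed.
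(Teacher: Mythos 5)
Your proposal is correct and follows essentially the same route as the paper's proof: both rest on the factorization $\mathbb{P}(\pi_2@\pi_1 \leadsto Z_2\cup Z_1)=\mathbb{P}(\pi_2\leadsto Z_2)\mathbb{P}(\pi_1\leadsto Z_1)$ coming from the fresh-start independence of the two phases, followed by the monotonicity of $v$ applied to $Z_1\subseteq Z_1\cup Z_2$. The only cosmetic difference is that the paper first proves the symmetry $F^{\pi_1@\pi_2}=F^{\pi_2@\pi_1}$ and then bounds $F^{\pi_1@\pi_2}-F^{\pi_1}$, whereas you work with $F^{\pi_2@\pi_1}$ directly; the substance is identical.
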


\begin{proof}
We first show that $F^{\pi_1@\pi_2}= F^{\pi_2@\pi_1}$. In a concatenated policy, the two phases are independent since
no information is shared among the two phases. Hence for a given allocation pair $(Z_1,Z_2)$ where $Z_1\in \mathcal{Z}^{n_1}$, $Z_2\in\mathcal{Z}^{n_2}$,
we have 
\begin{eqnarray*}
\mathbb{P}(\pi_1@\pi_2 \leadsto (Z_1,Z_2))&=&\mathbb{P}(\pi_1 \leadsto Z_1 )
\mathbb{P}(\pi_2 \leadsto Z_2)\\
&=&\mathbb{P}(\pi_2 \leadsto Z_2)\mathbb{P}(\pi_1 \leadsto Z_1 )\\
&=&\mathbb{P}(\pi_2@\pi_1 \leadsto (Z_2,Z_1)).
\end{eqnarray*}
$F^{\pi_1@\pi_2}=F^{\pi_2@\pi_1}$ follows immediately from taking the sum over all possible pairs of $(Z_1,Z_2))$ such that $Z_2\cup Z_1=Z$ for any fixed allocation $Z$.

Therefore $F^{\pi_1} \le F^{\pi_1@\pi_2}$ holds if and only if $F^{\pi_1} \le F^{\pi_2@\pi_1}$. We then finish this proof by showing  $F^{\pi_1} \le F^{\pi_1@\pi_2}$.
We write $F^{\pi_1@\pi_2}-F^{\pi_1} $ as a telescoping sequence

\begin{eqnarray*}
&&F^{\pi_1@\pi_2}-F^{\pi_1}\\ &=& 
\sum_{Z\in\mathcal{Z}^{n_1+n_2}}v(Z)\mathbb{P}(\pi_1 @\pi_2  \leadsto Z)
-\sum_{Z_1\in\mathcal{Z}^{n_1}}v(Z_1)\mathbb{P}(\pi_1  \leadsto Z_1)\label{p1}\\
&=&
\sum_{Z\in\mathcal{Z}^{n_1+n_2}}\sum_{Z_1\cup Z_2=Z}
v(Z)\mathbb{P}(\pi_1  \leadsto Z_1)\mathbb{P}(\pi_2  \leadsto Z_2)\\
&-&\sum_{Z_1\in\mathcal{Z}^{n_1}}\sum_{Z_2\in\mathcal{Z}^{n_2}}
v(Z_1)\mathbb{P}(\pi_1  \leadsto Z_1)\mathbb{P}(\pi_2  \leadsto Z_2)\label{p2}\\
&=&\sum_{Z_1\in\mathcal{Z}^{n_1}}\sum_{Z_2\in\mathcal{Z}^{n_2}}\Big{[}
v(Z_1 \cup Z_2)-v(Z_1)\Big{]}\mathbb{P}(\pi_1  \leadsto Z_1)\mathbb{P}(\pi_2  \leadsto Z_2)\\
&\ge& 0,
\end{eqnarray*}

where the second equality holds due to the same reason as in the proof above for $F^{\pi_1@\pi_2}=F^{\pi_2@\pi_1}$ and the third equality is just the same summation in different orders.
The last inequality holds because of the monotonicity of multi-set function $v$. 
\end{proof}

Based on the monotonicity of $v$ and a similar argument as in Proposition \ref{mon}, $F$ is non-decreasing with respect to the number of measurements. Thus the more measurements, the better the policy. Hence $\pi^*$ has exactly $N$ measurements. We have the following sub-optimality bound on $KG$'s prior-value. For a proof see Appendix \ref{C}.
\begin{proposition} \label{4}
Let $\rho^{   \text{KG}  ,n}=F^{   \text{KG}  ^{[n]}}-F^{   \text{KG}  ^{[n-1]}},$ then
\begin{eqnarray} \nonumber
F^{\pi^*} \leq F^{\   \text{KG}  ^{[n-1]}@\pi^*} &\leq&  F^{\   \text{KG}  ^{[n-1]}}+ N\rho^{   \text{KG}  ,n}\\ 
&=&\sum_{i=0}^{n-1} \rho^{   \text{KG}  ,i}+N \rho^{   \text{KG}  , n}, ~n=0,1,...,N-1. \label{ad}
\end{eqnarray}
\end{proposition}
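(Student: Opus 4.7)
The plan is to prove the two inequalities separately and then recover the equality by telescoping. The first inequality, $F^{\pi^*} \leq F^{\KG^{[n-1]}@\pi^*}$, is an immediate application of Lemma~\ref{mon} (the monotonicity of the prior-value under policy concatenation), taking $\pi_1 = \pi^*$ and $\pi_2 = \KG^{[n-1]}$ there.

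For the second inequality, $F^{\KG^{[n-1]}@\pi^*} \leq F^{\KG^{[n-1]}} + N\rho^{\KG,n}$, the approach will mirror the Nemhauser-Wolsey-Fisher analysis of greedy algorithms for submodular maximization, adapted to the multi-set setting. First, using the phase-independence established inside the proof of Lemma~\ref{mon}, I would write
\begin{equation*}
F^{\KG^{[n-1]}@\pi^*} - F^{\KG^{[n-1]}} = \sum_{Z_1, Z_2} \mathbb{P}(\KG^{[n-1]} \leadsto Z_1)\, \mathbb{P}(\pi^* \leadsto Z_2)\, \bigl[v(Z_1 \cup Z_2) - v(Z_1)\bigr].
\end{equation*}
Then, invoking Assumption~\ref{Ass} and property~(4) of Proposition~\ref{a1}, the integrand is bounded by $\sum_{x \in Z_2}\rho_x(Z_1) \leq N \max_x \rho_x(Z_1)$, since $|Z_2| = N$. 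The task is thereby reduced to showing that, averaging $Z_1$ over the $\KG^{[n-1]}$-trajectory distribution, $\mathbb{E}_{Z_1}[\max_x \rho_x(Z_1)] \leq \rho^{\KG,n}$.

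This final step is what I expect to be the main obstacle. The difficulty is that KG greedily maximizes the posterior-conditional marginal $\Delta(x|\psi) = \nu_x^{\KG,n}$, which depends on the observed partial realization $\psi$, whereas the prior marginal $\rho_x(Z_1) = v(Z_1 \cup \{x\}) - v(Z_1) = \mathbb{E}_{\psi'|Z_1}[\Delta(x|\psi')]$ is an unconditional expectation over realizations. Bridging the two will rely on Jensen's inequality for the $\max$ operator, $\max_x \mathbb{E}[\Delta(x|\psi')] \leq \mathbb{E}[\max_x \Delta(x|\psi')] = \mathbb{E}[\Delta(x^{\KG}(\psi')|\psi')]$, combined with a careful identification of the $\KG$-induced distribution on partial realizations with the prior-conditional distribution given $Z_1$; this identification is possible because KG's choices are deterministic functions of the observations, so its pathwise weights are inherited from the prior.

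Finally, the equality $F^{\KG^{[n-1]}} = \sum_{i=0}^{n-1}\rho^{\KG,i}$ is immediate by telescoping the definition $\rho^{\KG,i} = F^{\KG^{[i]}} - F^{\KG^{[i-1]}}$, with the convention $F^{\KG^{[-1]}} := 0$ (no measurements contributes nothing beyond the baseline $\max_x \theta_x^0$).
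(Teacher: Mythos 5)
Your overall architecture is reasonable and two of the three pieces are solid: the first inequality is indeed immediate from Lemma \ref{mon} (the paper uses it the same way), and the telescoping identity $F^{\KG^{[n-1]}}=\sum_{i=0}^{n-1}\rho^{\KG,i}$ is correct with the convention $F^{\KG^{[-1]}}=0$. Your decomposition of the middle inequality also differs from the paper's in a legitimate way: you extract the factor $N$ from $|Z_2|=N$ via property (4) of Proposition \ref{a1}, whereas the paper telescopes over the $N$ steps of $\pi^*$ one at a time and applies the two-set form of submodularity to each single-item marginal. Either route is fine up to the step you yourself flag as the obstacle.

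That obstacle, however, is a genuine gap, and the Jensen bridge you propose points in the wrong direction. Your reduction requires
\begin{equation*}
\sum_{Z_1}\mathbb{P}(\KG^{[n-1]}\leadsto Z_1)\,\max_x\rho_x(Z_1)\;\le\;\rho^{\KG,n}.
\end{equation*}
But in the prior view, $F^{\KG^{[n]}}$ weights the \emph{unconditional} quantity $v(Z_1\cup\{x\})$ by $\mathbb{P}(\KG^{[n]}\leadsto Z_1\cup\{x\})$, so
\begin{equation*}
\rho^{\KG,n}=\sum_{Z_1}\mathbb{P}(\KG^{[n-1]}\leadsto Z_1)\sum_x \mathbb{P}\bigl(x^{n}=x\mid \KG^{[n-1]}\leadsto Z_1\bigr)\,\rho_x(Z_1),
\end{equation*}
a convex combination of the $\rho_x(Z_1)$, hence $\rho^{\KG,n}\le\sum_{Z_1}\mathbb{P}(\KG^{[n-1]}\leadsto Z_1)\max_x\rho_x(Z_1)$ --- the opposite of what you need, with equality only if KG's $n$-th choice almost surely coincides with $\arg\max_x\rho_x(Z_1)$, the \emph{allocation-level} maximizer. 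KG instead maximizes the realization-dependent marginal $\Delta(x|\psi)$, and your chain $\max_x\mathbb{E}_{\psi}[\Delta(x|\psi)]\le\mathbb{E}_{\psi}[\max_x\Delta(x|\psi)]=\mathbb{E}_{\psi}[\Delta(x^{\KG}(\psi)|\psi)]$ bounds $\max_x\rho_x(Z_1)$ by the \emph{posterior-view} step gain; that quantity is an upper bound on the prior-view $\rho^{\KG,n}$ as well, because the choice $x^{\KG}(\psi)$ and the marginal $\Delta(x|\psi)$ are coupled through $\psi$, whereas $v(Z_1\cup\{x\})$ averages over $\psi$ without that coupling. The identification you hope for therefore conflates exactly the two views that Section \ref{bou} is at pains to distinguish. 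The paper's proof avoids stating your reduction: it compares each single-item marginal of $\pi^*$ against a realization-dependent greedy comparator $z^*(Z,\KG^{[n-1]},\Phi)$ and closes with a grouping argument over the partial realizations that lead to the same greedy decision. To complete your write-up you would need either to adopt that comparator or to add a hypothesis under which KG's choice is measurable with respect to the allocation alone.
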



We now derive a bound for the adaptive greedy policy by applying linear programming to the problem of minimizing $\frac{F^{\KG}}{F^{\pi^*}}$ subject to the inequalities (\ref{ad}), which is a worst-case analysis. The following lemma states the linear program and its solution. We  use it afterwards to establish the bounds.

\begin{lemma} \label{5}
Given $N \in \mathbb{Z}_+$, consider the following linear program
\begin{eqnarray*} 
&&\min\sum_{i=0}^{N-1}a_i, \\\nonumber
&&\sum_{i=0}^{t-1} a_i+Na_t \geq 1, ~t=0,1,...,N-1.\\\nonumber
\end{eqnarray*}

Then under these $N$ constraints,  $min\sum_{i=0}^{N-1}a_i =1-\alpha^{N}$, where $\alpha=\frac{N-1}{N}$.
\end{lemma}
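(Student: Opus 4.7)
The plan is to reduce the linear program to a one-dimensional recurrence by introducing the partial sums $S_t := \sum_{i=0}^{t-1} a_i$ for $t = 0, 1, \ldots, N$ (with the convention $S_0 = 0$). Under this substitution the objective $\sum_{i=0}^{N-1} a_i$ is simply $S_N$, while $a_t = S_{t+1} - S_t$. The $t$th constraint $S_t + N a_t \geq 1$ then rewrites as $S_t + N(S_{t+1} - S_t) \geq 1$, which rearranges to
\begin{equation*}
S_{t+1} \;\geq\; \alpha S_t + \tfrac{1}{N}, \qquad t = 0, 1, \ldots, N-1,
\end{equation*}
using $\alpha = (N-1)/N$ and the identity $1-\alpha = 1/N$. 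So the problem is to minimize $S_N$ over sequences with $S_0=0$ satisfying this recurrence inequality at each step.

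Next I would establish the lower bound $S_t \geq 1 - \alpha^t$ for every feasible point by induction on $t$. The base case is immediate since $S_0 = 0 = 1 - \alpha^0$. Assuming $S_t \geq 1 - \alpha^t$, the recurrence inequality and the fact that $\alpha > 0$ give
\begin{equation*}
S_{t+1} \;\geq\; \alpha S_t + \tfrac{1}{N} \;\geq\; \alpha(1-\alpha^t) + \tfrac{1}{N} \;=\; \bigl(\alpha + \tfrac{1}{N}\bigr) - \alpha^{t+1} \;=\; 1 - \alpha^{t+1},
\end{equation*}
where the last equality uses $\alpha + 1/N = 1$. Setting $t=N$ yields $\sum_{i=0}^{N-1} a_i = S_N \geq 1 - \alpha^N$ for every feasible choice of $(a_0,\ldots,a_{N-1})$.

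To show the bound is attained, I would exhibit the canonical feasible solution obtained by making every constraint tight: define $S_0^\star = 0$ and $S_{t+1}^\star = \alpha S_t^\star + \tfrac{1}{N}$, and set $a_t^\star := S_{t+1}^\star - S_t^\star = (1-S_t^\star)/N$. Running the same identity forward with equality gives $S_t^\star = 1 - \alpha^t$, so $S_N^\star = 1 - \alpha^N$, matching the lower bound. No step is a serious obstacle here; the only mild subtlety is noticing that the substitution to partial sums removes the need for any LP-duality argument, turning the problem into a single scalar recurrence, and the induction closes cleanly because the inhomogeneous term $1/N$ exactly complements the decay rate $\alpha$.
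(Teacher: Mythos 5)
Your proof is correct and complete: the substitution to partial sums $S_t$ turns the $N$ constraints into the scalar recurrence inequality $S_{t+1}\geq \alpha S_t + 1/N$, the induction giving $S_t \geq 1-\alpha^t$ closes because $\alpha+1/N=1$ (and uses only $\alpha\geq 0$), and the all-constraints-tight solution attains the bound, so both the lower bound and its achievability are established. The paper itself does not prove this lemma; it defers entirely to the reference \cite{nemhauser1978analysis}, where the classical argument analyzes the linear program directly (identifying the optimal solution as the one making every constraint tight, justified through the LP structure/duality). Your route is more elementary and self-contained: by collapsing the problem to a one-dimensional recurrence you avoid any appeal to LP theory, and the verification that the tight solution is feasible and optimal is immediate. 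The only thing worth stating explicitly is the observation you already make implicitly, namely that $\alpha=(N-1)/N\geq 0$ is what lets you multiply the induction hypothesis by $\alpha$ without reversing the inequality; with that noted, nothing is missing.
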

The proof of this lemma can be found in \cite{nemhauser1978analysis}.

We have the following results, which generalizes the classic result of the greedy algorithm that achieves $(1-1/e)$-approximation to prior-optimality for ranking and selection problems.

\begin{theorem}\label{theo}
Assume we have a budget of $N$ measurements. Let $\pi^*$ denote the optimal sequential policy for the ranking and selection problem, then we have
 $$\frac{F^{KG}}{F^{\pi^*}} \ge 1- (\frac{N-1}{N})^N.$$
\end{theorem}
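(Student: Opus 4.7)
The plan is to apply Lemma 4.5 (the linear programming lemma) to a normalized version of the recurrence in Proposition 4.4. The key observation is that, after dividing through by $F^{\pi^*}$, the inequalities from Proposition 4.4 become exactly the feasibility constraints of the LP whose optimal value is $1-\alpha^N$ with $\alpha=(N-1)/N$.

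More concretely, I would first set $a_i := \rho^{\KG,i}/F^{\pi^*}$ for $i=0,1,\dots,N-1$, where $\rho^{\KG,i}=F^{\KG^{[i]}}-F^{\KG^{[i-1]}}$ and I interpret $F^{\KG^{[-1]}}$ as the prior-value of the empty policy, which is $0$ by definition (no measurements means $v(\emptyset)=0$). Dividing inequality \eqref{ad} by $F^{\pi^*}>0$ gives
\begin{equation*}
\sum_{i=0}^{t-1} a_i + N a_t \;\ge\; 1, \qquad t=0,1,\dots,N-1,
\end{equation*}
so the vector $(a_0,\dots,a_{N-1})$ is feasible for the LP of Lemma 4.5. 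Hence
\begin{equation*}
\sum_{i=0}^{N-1} a_i \;\ge\; 1 - \left(\tfrac{N-1}{N}\right)^N.
\end{equation*}

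Next I would telescope the sum on the left:
\begin{equation*}
\sum_{i=0}^{N-1} \rho^{\KG,i}
= F^{\KG^{[N-1]}}-F^{\KG^{[-1]}}
= F^{\KG},
\end{equation*}
using the convention above and the fact that $\KG^{[N-1]}$ runs exactly $N$ steps, which is the full KG policy under a budget of $N$. Dividing by $F^{\pi^*}$ then identifies $\sum_{i=0}^{N-1} a_i$ with $F^{\KG}/F^{\pi^*}$, and the theorem follows immediately.

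I do not expect any real obstacle here: Proposition 4.4 already does the heavy lifting of tying KG's incremental gains to $F^{\pi^*}$, and Lemma 4.5 provides the closed-form LP value. The only points requiring minor care are (i) verifying the convention $F^{\KG^{[-1]}}=0$ so that the telescoping gives $F^{\KG}$ cleanly, (ii) checking the non-degenerate case $F^{\pi^*}>0$ (the statement is trivial otherwise, since then $F^{\KG}=0$ as well by monotonicity), and (iii) aligning index ranges between Proposition 4.4 (which gives $N$ inequalities indexed $n=0,\dots,N-1$) and Lemma 4.5 (which takes exactly $N$ constraints). Once these align, the bound $1-((N-1)/N)^N$ is immediate.
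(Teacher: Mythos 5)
Your proposal is correct and follows essentially the same route as the paper: normalize the inequalities of Proposition \ref{4} by $F^{\pi^*}$, observe that $a_i=\rho^{\KG,i}/F^{\pi^*}$ is feasible for the linear program of Lemma \ref{5}, and telescope $\sum_{i=0}^{N-1}\rho^{\KG,i}=F^{\KG}$. Your added care about the convention $F^{\KG^{[-1]}}=0$ and the degenerate case $F^{\pi^*}=0$ is a minor tidying of details the paper leaves implicit, not a different argument.
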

\begin{proof}
By Proposition \ref{4}, we have $F^{\pi^*} \leq \sum_{i=0}^{n-1} \rho^{\KG, i}+N \rho^{\KG, n}, ~n=0,1,...,N-1.$ Divide by $F^{\pi^*} $ on both sides of this inequality, we have $$1\leq \sum_{i=0}^{n-1} \frac{\rho^{\KG, i}}{F^{\pi^*}}+N\frac{\rho^{\KG, n}}{F^{\pi^*}}, n=0,1,...,N-1.$$  Let $a_i= \frac{\rho^{\KG, i}}{F^{\pi^*}}$, and then these inequalities are identical to the constraints in Lemma \ref{5}. We notice that $$\min \sum_{i=0}^{N-1} a_i = \min \sum_{i=0}^{N-1}\frac{\rho^{\KG, i}}{F^{\pi^*}} \leq \sum_{i=0}^{N-1}\frac{\rho^{\KG, i}}{F^{\pi^*}}=\frac{F^{\KG}}{F^{\pi^*}}.$$ 

By Lemma \ref{5}, we have $\min \sum_{i=0}^{N-1} a_i = 1-\alpha^{N}$, so $\frac{F^{\KG}}{F^{\pi^*}}\geq 1- \alpha^N=1-(\frac{N-1}{N}) ^N$.
\end{proof}

\section{Analysis of Submodularity of the Value of Information}\label{asvi}

The finite-time bounds obtained in the previous sections assume that the value of information is submodular. In general, submodularity does not hold for arbitrary value functions. In this section, we  analyze  the submodularity of the two-alternative case for independent beliefs.  

While submodularity is a property for multi-set functions, we can extend it to any
continuous function by making it possible for the increment to take any positive value. It could be easily extended to any continuous function. This allows us to use results from real analysis to study submodularity.

\begin{definition} A function $f: \mathbb{R}^n \mapsto \mathbb{R}$ is submodular if  for all $x, y \in \mathbb{R}^n$, $x_i \le y_i$ and $\delta \in \mathbb{R}^n_+,$
$$f(x+\delta)-f(x) \ge f(y+\delta) - f(y).$$
\end{definition}

We  show that submodularity of $\mathcal{C}^2$ functions is directly related to its second derivatives and cross-derivatives (the proof is given in Appendix \ref{D}):

\begin{theorem}\label{a17} $\mathcal{C}^2$ function f: $\mathbb{R}^n\rightarrow \mathbb{R}$ is submodular if and only if
every element of its Hessian is non-positive.
\end{theorem}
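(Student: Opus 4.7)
The plan is to prove both directions by relating the defining inequality of submodularity to second-order directional behavior of $f$, using that $f \in \mathcal{C}^2$ so that partial derivatives exist and can be interchanged.

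For the forward direction, I would pick arbitrary $x \in \mathbb{R}^n$ and indices $i,j$ (not necessarily distinct), then apply the definition with $y = x + s e_j$ and $\delta = t e_i$ for small $s,t > 0$. The submodularity inequality becomes
\[
f(x + t e_i) - f(x) \;\geq\; f(x + s e_j + t e_i) - f(x + s e_j).
\]
Dividing by $t$ and sending $t \to 0$ gives $\partial_i f(x) \geq \partial_i f(x + s e_j)$; dividing by $s$ and sending $s \to 0$ then yields $\partial_j \partial_i f(x) \leq 0$. Since $i,j$ and $x$ were arbitrary, every entry of the Hessian is non-positive. Note that the $i=j$ case handles the diagonal entries (concavity along each coordinate axis), which would be missed if one only considered the classical lattice definition.

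For the converse, assume $H_f \leq 0$ entrywise. Given $x \leq y$ coordinatewise and $\delta \in \mathbb{R}^n_+$, I would rewrite the difference of marginal gains using the fundamental theorem of calculus twice. First,
\[
\big[f(x+\delta)-f(x)\big] - \big[f(y+\delta)-f(y)\big] \;=\; \sum_{i} \delta_i \int_0^1 \bigl[\partial_i f(x + s\delta) - \partial_i f(y + s\delta)\bigr]\, ds.
\]
For each fixed $s$, applying FTC along the segment from $u := x + s\delta$ to $v := y + s\delta$ (which satisfies $v - u = y - x \geq 0$) gives
\[
\partial_i f(v) - \partial_i f(u) \;=\; \sum_{j} (y_j - x_j) \int_0^1 \partial_j\partial_i f\bigl(u + r(v-u)\bigr)\, dr \;\leq\; 0,
\]
since each factor $y_j - x_j$ is non-negative and each $\partial_j\partial_i f$ is non-positive by assumption. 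Therefore $\partial_i f(x+s\delta) \geq \partial_i f(y+s\delta)$ for every $s \in [0,1]$ and every $i$, and substituting back into the outer integral (whose weights $\delta_i$ are also non-negative) yields the submodular inequality.

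I do not expect serious obstacles: both directions are textbook applications of FTC and limit arguments, and $\mathcal{C}^2$ regularity legitimizes the exchange of limits and integrals needed. The only subtlety worth flagging is that the definition used here is strictly stronger than the usual lattice submodularity (since it allows $y = x + s e_i$ with the same coordinate perturbed), which is precisely why the diagonal Hessian entries, not merely the off-diagonal ones, must be controlled; I would make that choice of $y$ explicit in the forward direction to make clear that concavity in each coordinate is indeed forced.
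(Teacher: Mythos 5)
Your proof is correct, and while the forward direction matches the paper's argument in spirit (coordinate perturbations plus difference quotients, with the $i=j$ case covering the diagonal entries — a point you rightly emphasize, since the definition here lets $y$ and $\delta$ move the \emph{same} coordinate), your converse is genuinely cleaner and more complete than the paper's. The paper first proves the converse only in two dimensions for axis-parallel rectangles, writing the second difference as $\int\int f_{xy}$ and handling the diagonal term by the mean value theorem, and then disposes of the general $n$-dimensional ``parallelogram'' case with a one-sentence claim that the difference of marginal gains ``can be decomposed into a summation of marginal value differences whose reference points form rectangles parallel to coordinate planes'' — a decomposition it never writes down. Your double application of the fundamental theorem of calculus, giving
\[
\bigl[f(x+\delta)-f(x)\bigr]-\bigl[f(y+\delta)-f(y)\bigr]=\sum_{i}\delta_i\int_0^1\sum_j (x_j-y_j)\int_0^1 \partial_j\partial_i f\bigl(x+s\delta+r(y-x)\bigr)\,dr\,ds ,
\]
handles arbitrary $x\le y$ and $\delta\ge 0$ in $\mathbb{R}^n$ in one step, with every summand manifestly non-negative because $\delta_i\ge 0$, $y_j-x_j\ge 0$, and $\partial_j\partial_i f\le 0$. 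What the paper's route buys is a picture (signs of $f_{xx}$ and $f_{xy}$ on a rectangle); what yours buys is a complete proof of the general case without the unproven decomposition step, at no extra cost in machinery. One cosmetic note: the paper's displayed inequality $f(x_0+t,y_0)-f(x_0,y_0)\le f(x_0+t,y_0+s)-f(x_0,y_0+s)$ has its direction reversed relative to the submodularity definition; your version has the signs right throughout.
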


The concavity of the value of information has been studied extensively by \cite{Frazier:2010:PLM:1898671.1898677}. In this section, we only study the cross-derivatives of the value of information.

Let $M=2$ and the measurement allocation $z=(z_1,z_2)$.
The value of information $v(z)=s(z)f(-\frac{|\theta_1^0-\theta_2^0|}{s(z)})$, where $s(z)=\sqrt{\tilde{\sigma}_1^2(z_1)+\tilde{\sigma}_2^2(z_2)}$,  $\tilde{\sigma}_i^2(z_i)=\frac{\sigma_i^{2,0} z_i}{{\sigma_W^2}/{\sigma_i^{2,0}}+z_i}$, $f(a)=a\Phi(a)+\phi(a)$, $\Phi$ and $\phi$ are the standard normal cumulative distribution and density respectively \cite{Frazier:2010:PLM:1898671.1898677}.

Although the value of information is not concave in general in the two-alternative case, $v$ is concave on the region where all $z_i$'s are large enough (see Theorem 2 in \cite{Frazier:2010:PLM:1898671.1898677}).

We directly calculate the first derivative and cross-derivative of $v$ as
\begin{eqnarray*}
\frac{\partial v}{\partial z_1} &&= \frac{\tilde{\sigma}_1(z_1)\tilde{\sigma}_1'(z_1)}{s(z)}\bigg{[}f(-\frac{|\theta_1^0-\theta_2^0|}{s(z)})+|\theta_1^0-\theta_2^0|\frac{\Phi(-\frac{|\theta_1^0-\theta_2^0|}{s(z)})}{s(z)} \bigg{]}, \\
\frac{\partial^2 v}{\partial z_1 \partial z_2} &&=\frac{\tilde{\sigma}_1(z_1)\tilde{\sigma}_1'(z_1)\tilde{\sigma}_2(z_2)\tilde{\sigma}_2'(z_2)}{s^3(z)} \phi(-\frac{|\theta_1^0-\theta_2^0|}{s(z)})\bigg{(} \frac{|\theta_1^0-\theta_2^0|^2}{\tilde{\sigma}_1^2(z_1)+\tilde{\sigma}_2^2(z_2)}-1\bigg{)}.
\end{eqnarray*}

\begin{theorem}
The value of information is submodular when $M = 2$ and $\theta_{1}^0=\theta_2^0$.
\end{theorem}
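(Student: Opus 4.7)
The plan is to verify the hypotheses of Theorem \ref{a17}: since $v$ is a $\mathcal{C}^2$ function of $(z_1, z_2)$, it suffices to show that every entry of its Hessian is non-positive. First I would substitute the assumption $\theta_1^0 = \theta_2^0$ into the expression for $v$. Because $|\theta_1^0 - \theta_2^0| = 0$, the function $f$ is evaluated at $0$, and $f(0) = 0\cdot\Phi(0) + \phi(0) = \phi(0)$, so the value of information collapses to the simple closed form
\begin{equation*}
v(z_1, z_2) = \phi(0)\, s(z) = \phi(0)\sqrt{\tilde{\sigma}_1^2(z_1) + \tilde{\sigma}_2^2(z_2)}.
\end{equation*}

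For the off-diagonal entry, I would directly plug $\theta_1^0 = \theta_2^0$ into the formula for $\partial^2 v / \partial z_1 \partial z_2$ already derived in the text. The bracketed factor reduces to $0 - 1 = -1$, while the remaining prefactor $\tilde{\sigma}_1(z_1)\tilde{\sigma}_1'(z_1)\tilde{\sigma}_2(z_2)\tilde{\sigma}_2'(z_2)/s^3(z)$ is strictly positive, since each $\tilde{\sigma}_i^2(z_i) = \sigma_i^{2,0} z_i / (\sigma_W^2/\sigma_i^{2,0} + z_i)$ is a strictly increasing positive function on $z_i \ge 0$. Together with $\phi(0) > 0$, this shows the cross-derivative is strictly negative.

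For the diagonal entries, rather than invoking the concavity result of \cite{Frazier:2010:PLM:1898671.1898677} (which only applies when all $z_i$ are sufficiently large), I would exploit the separable square-root structure obtained above. Writing $g(z_1, z_2) = \tilde{\sigma}_1^2(z_1) + \tilde{\sigma}_2^2(z_2)$, a direct differentiation gives
\begin{equation*}
\frac{\partial^2 v}{\partial z_i^2} = \phi(0)\left[\frac{(\tilde{\sigma}_i^2)''(z_i)}{2\sqrt{g}} - \frac{\bigl[(\tilde{\sigma}_i^2)'(z_i)\bigr]^2}{4\,g^{3/2}}\right].
\end{equation*}
Both terms in the bracket are non-positive: the second is manifestly so, and the first because an elementary computation shows $\tilde{\sigma}_i^2(z_i)$ is concave in $z_i$. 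Hence $\partial^2 v / \partial z_i^2 \le 0$. Combined with the cross-derivative, every entry of the Hessian is non-positive, so Theorem \ref{a17} yields submodularity of the continuous extension, which restricted to integer allocations gives the desired multi-set submodularity.

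The main obstacle, if any, is the diagonal entries: the paper only derived the cross-derivative explicitly and stated that global concavity of $v$ generally fails, so the work lies in checking that the equal-prior-mean assumption rules out the failure region. The calculation above shows that this reduction to a pure scalar square root is what makes the concavity hold globally rather than only asymptotically.
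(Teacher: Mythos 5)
Your proof is correct and follows the same overall strategy as the paper --- verify via Theorem \ref{a17} that every entry of the Hessian of $v$ is non-positive --- but you handle the diagonal entries differently, and in a way that is arguably more self-contained. The paper disposes of the diagonal (concavity) part by citing Remark 2 of Frazier et al.\ (2010) and only argues the cross-derivative explicitly, exactly as you do, by setting $|\theta_1^0-\theta_2^0|=0$ in the displayed formula so the bracket becomes $-1$. You instead observe that the equal-prior-mean assumption collapses $v$ to $\phi(0)\sqrt{\tilde{\sigma}_1^2(z_1)+\tilde{\sigma}_2^2(z_2)}$ and verify $\partial^2 v/\partial z_i^2\le 0$ by hand, using the concavity of each $\tilde{\sigma}_i^2(z_i)=\sigma_i^{2,0}z_i/(\sigma_W^2/\sigma_i^{2,0}+z_i)$; this is a clean elementary computation and makes explicit \emph{why} concavity holds globally here rather than only for large $z_i$. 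Two trivial quibbles: the prefactor of the cross-derivative and the quantity $g$ vanish when some $z_i=0$ (so ``strictly positive'' should be ``non-negative,'' and the origin, where $v$ fails to be $\mathcal{C}^2$, should be handled by continuity or by restricting the Hessian argument to the interior) --- but the paper's own proof glosses over the same boundary point, and neither issue affects the conclusion.
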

\begin{proof}
Concavity of $v(z)$ is proven in Remark 2 by \cite{Frazier:2010:PLM:1898671.1898677}. Since $\theta_{1}^0=\theta_2^0$, $|\theta_{1}^0-\theta_2^0|=0$ and thus $\frac{\partial^2 v}{\partial z_1 \partial z_2} \le 0$. Therefore, $v$ is submodular in this case.
\end{proof}
$\frac{\partial^2 v}{\partial z_1 \partial z_2} \le 0$ is equivalent to $|\theta_1^0-\theta_2^0|^2 \le \tilde{\sigma}_1^2(z_1)+\tilde{\sigma}_2^2(z_2)$. Rewriting this inequality, we get
\begin{equation} \label{fatt}
\frac{1}{\frac{1}{\sigma_1^{2,0}}+\frac{z_1}{\sigma_W^2}}+\frac{1}{\frac{1}{\sigma_2^{2,0}}+\frac{z_2}{\sigma_W^2}} \le \sigma_1^{2,0}+\sigma_2^{2,0}-|\theta_1^0-\theta_2^0|^2.
\end{equation}
We need $ \sigma_1^{2,0}+\sigma_2^{2,0}-|\theta_1^0-\theta_2^0|^2 \ge 0$, which can be achieved by setting our prior variance large enough or using a uniform prior over all alternatives. This is very reasonable when we have very little information about our problem domain.

Inequality equation (\ref{fatt}) defines a region in the $z_1-z_2$ plane. Specifically, this region has the hyperbolic line $\frac{1}{\frac{1}{\sigma_1^{2,0}}+\frac{z_1}{\sigma_W^2}}+\frac{1}{\frac{1}{\sigma_2^{2,0}}+\frac{z_2}{\sigma_W^2}} = \sigma_1^{2,0}+\sigma_2^{2,0}-|\theta_1^0-\theta_2^0|^2$ as its boundary and contains infinity. In particular, when $z_1$ and $z_2$ are large enough (or equivalently when our measurement is accurate enough),  the value of information is submodular.

Since there is no closed-form expression for the value of information under arbitrary allocations,  we cannot verify submodularity in a simple way for problems with more than two alternatives and for correlated beliefs.   Instead, it can be checked using numerical approximation and is easy to guarantee by running repeated experiments and averaging to reduce measurement noise. A necessary condition is the concavity of the value of information for measuring a fixed alternative $x$ for $n$ times, which can be checked exactly.

Intuitively, we may expect that the marginal value of information should decline as we make more observations. But it is not always the case. It is shown  that the value of information for measuring a single alternative  may form an S-curve
which is concave when there are many measurements, but may be convex at the beginning \cite{Frazier:2010:PLM:1898671.1898677}. The
S-curve behavior arises when the measurement noise is large and thus a single measurement simply
contains too little information, leading to algorithmic difficulties and apparent paradoxes. This issue is not related to any specific policy, but rather is an inherent property of learning problems. Although the value of information is not necessarily concave, it can be made concave by measuring
each alternative enough times or (equivalently) using sufficiently precise measurements.

\section{Computational Experiments} \label{sec_numerical} Since the seminal paper by \cite{lai1985asymptotically}, there has been a long history in the optimal learning literature of designing algorithms with provable asymptotic or finite-time bounds \cite{audibert2010best,cappe2013kullback,srinivas2009gaussian,auer2002finite,garivier2008upper,Audibert:2009:ETU:1519541.1519712}.  But  none of these bounds are tight in finite time and different bounds can be based on different metrics. Hence, empirical experiments are needed to better understand the finite time performance of each policy. To this end, we  propose experiments to illustrate the finite time behavior of both KG and other optimal learning policies. We consider the following  learning settings that arise a lot in black box Bayesian optimization.

\textbf{Equal-prior: } $M=100$. The true values $\mu_x$ are uniformly distributed over $[0, 60]$ and measurement noise $\sigma_W=100$. $\theta_x^0=30$ and $\sigma_x^0=10$ for every $x$.

\textbf{Asymmetric unimodular function (AUF):} $x$ is a controllable parameter ranging from 21 to 120.  The objective function is $F(x,\xi)=\theta_1\min(x,\xi)-\theta_2x$,  where $\theta_1$,  $\theta_2$ and the distribution of the random variable $\xi$ are all unknown. The aim is to solve $\max_{x}\mathbb{E}F(x, \xi)$ while learning $\theta_1$, $\theta_2$ and the parameters that determine the distribution of $\xi$. The true distribution of $\xi$ is taken as a normal distribution with mean 60 and standard deviation 18 (corresponding to a 30$\%$ noise ratio).

\textbf{Goldstein-Price's function with additive noise:}  \begin{eqnarray*}
f(x,y,\phi) &=& [1 + (x + y + 1)^2(19 - 14x + 3x^2-14y+6xy+3y^2)]\cdot\\
&&[30 + (2x - 3y)^2
(18-32x + 12x^2+48y-36xy+27y^2)]+\phi,
\end{eqnarray*}
where $-3\le x\le 3$, $-3\le y \le 3$ and  are uniformly discretized into 13 $\times$ 13 alternatives.

In order to obtain the prior distribution,  we follow \cite{jones1998efficient} and \cite{huang2006global} to use Latin hypercube designs for initial fit. For independent beliefs, we adopt a uniform prior with the same mean value $\theta_x^0$ and standard deviation $\sigma_x^0$ for all alternatives. For  correlated beliefs, we use a constant mean value $\theta_x^0$ for all alternatives and  a prior covariance matrix of the form $$\Sigma_{xx'}^0=\sigma e^{-\sum_{i=1}^{d}\lambda_i(x_i-x_i')^2},$$
where each arm $x$ is a $d$-dimensional vector and $\sigma, \lambda_i$ are constant.   We adopt the rule of thumb by \cite{jones1998efficient} for the default number ($10\times p$) of points, where $p$ is the number of parameters to be estimated. In addition, as suggested by \cite{huang2006global}, to estimate the random errors, after the first $10 \times p$ points are evaluated, we add one replicate at each of the locations where the best $p$ responses are found. Maximum likelihood  estimation is then used to estimate the parameters based on the points in the initial design.

 The policies considered in this section is described as follows.

\textbf{EXPL:} A pure exploration strategy that tests each alternative  equally often.

\textbf{EXPT:} A pure exploitation strategy, $X^{\text{EXPT},n}(S^n) = \arg \max_{x} \hat{\mu}^n_x.$

\textbf{Interval Estimation (IE): } \cite{kaelbling1}
\begin{equation*} \label{IE}
X^{\IE,n}(S^n)=\arg \max_x  \theta^n_x + z_{\alpha/2} \sigma_x^n.
\end{equation*}

\textbf{Kriging:} \cite{huang2006global}

Let $x^*=\arg\max_x(\theta^n_x+\sigma^n_x)$, then
\begin{equation*}
X^{\Kriging,n}(S^n)=\arg \max_x(\theta_x^n-\theta_{x^*}^n)\Phi(\frac{\theta_x^n-\theta_{x^*}^n}{\sigma_x^n})+\sigma_x^n\phi(\frac{\theta_x^n-\theta_{x^*}^n}{\sigma_x^n}),
\end{equation*}
where $\phi$ and $\Phi$ are the standard normal density and cumulative distribution functions.

\textbf{UCB-E: } \cite{audibert2010best}
\begin{equation*} \label{UCB-E}
X^{\UCBE,n}(S^n) =\arg \max_x  \hat{\mu}^n_x + \sqrt{\frac{\alpha}{N_x^n}},
\end{equation*}
where $\hat{\mu}_x^n$, $N_x^n$ are the sample mean of $\mu_x$ and number of times $x$ has been measured up to
time $n$.  The quantity $\hat{\mu}_x^0$ is initialized by measuring each alternative once.

\textbf{SR:} \cite{audibert2010best} Let $A_1=\mathcal{X}$, $\overline{\log}(M)=\frac{1}{2}+\sum_{i=2}^M\frac{1}{i}$, $$n_m=\ceil[\Big]{\frac{1}{\overline{\log}(M)}\frac{n-M}{M+1-m}}.$$

For each phase $m=1,...,M-1$:
\begin{enumerate}
\item For each $x \in A_{m}$, select alternative $x$ for $n_m-n_{m-1}$ rounds.
\item Let $A_{m+1}=A_{m}\setminus \arg\min_{x \in A_m} \hat{\mu}_x.$
\end{enumerate}

\subsection{Finite Time Performance of Different Policies}
Although the theoretical analysis in the previous section is to bound the performance of the knowledge gradient policy to the optimal policy (in theory), the optimal sequential policy is impossible to find in practice. To this end, we compare the value of KG to the expected value of the best alternative $\max_x \mu_x$. Define the opportunity cost (OC$^{\pi}$)  of any policy $\pi$  at any time step $n$ as: $$\text{OC}^{\pi}=\max_x \mu_x - \mu_{\tilde{x}^n},$$
where $\tilde{x}^n=\arg \max_{x}\theta^n_x$. We illustrate the finite time behavior of the KG policy under Equal-prior and AUF with independent normal beliefs. We run KG and calculate the opportunity cost ratio = $\frac{\max_x \mu_x - \mu_{\tilde{x}^n}}{\max_x \mu_x}$ in each iteration. We report the mean with $90\%$ confidence interval  averaged over 1000 experiments in Figure \ref{ocr}.

\begin{figure}
    \centering
   \subfigure[Equal-prior]{\includegraphics[width=0.32\textwidth]{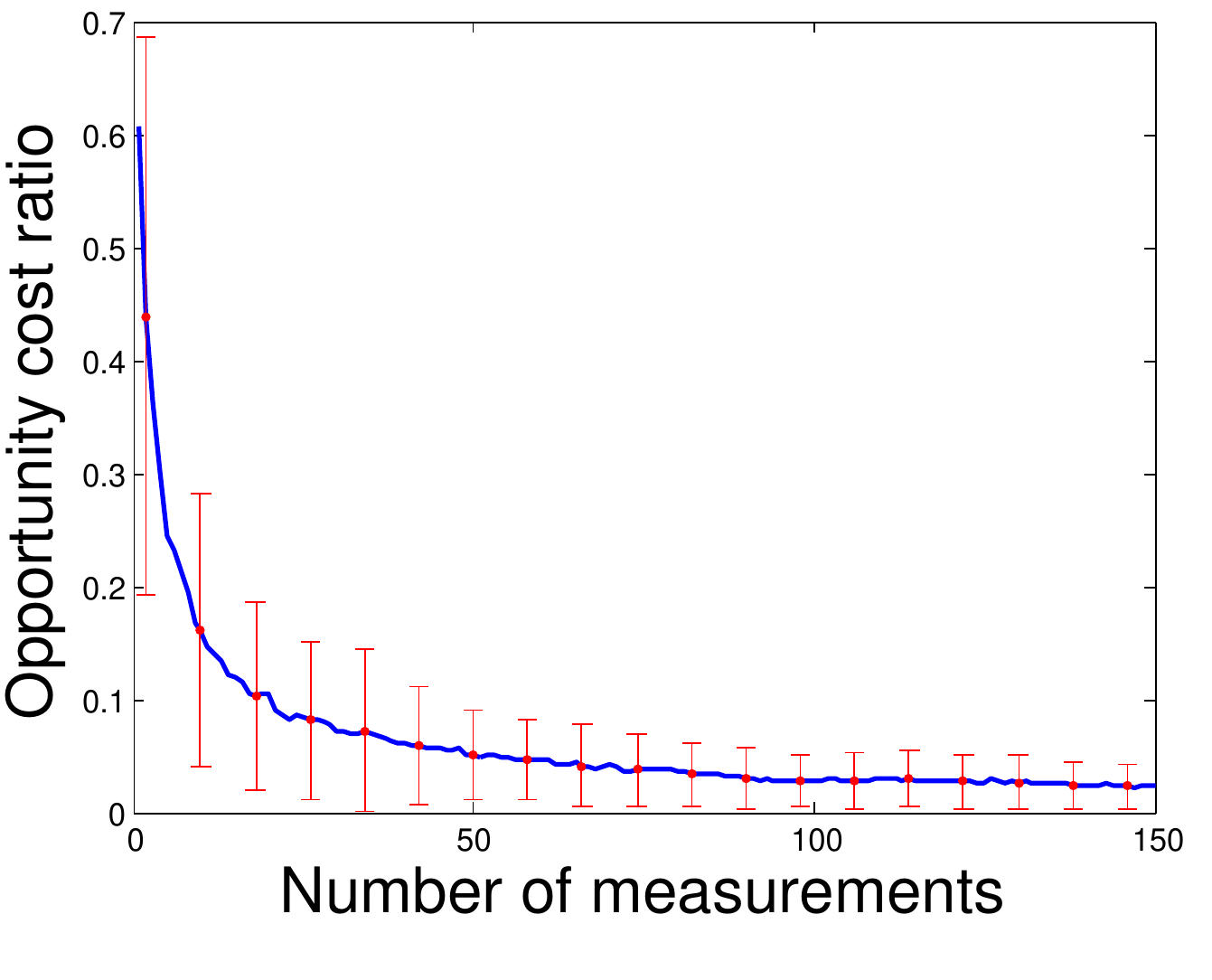}}
   \subfigure[AUF  with $\theta_2=0.2\theta_1$]{   \includegraphics[width=0.32\textwidth]{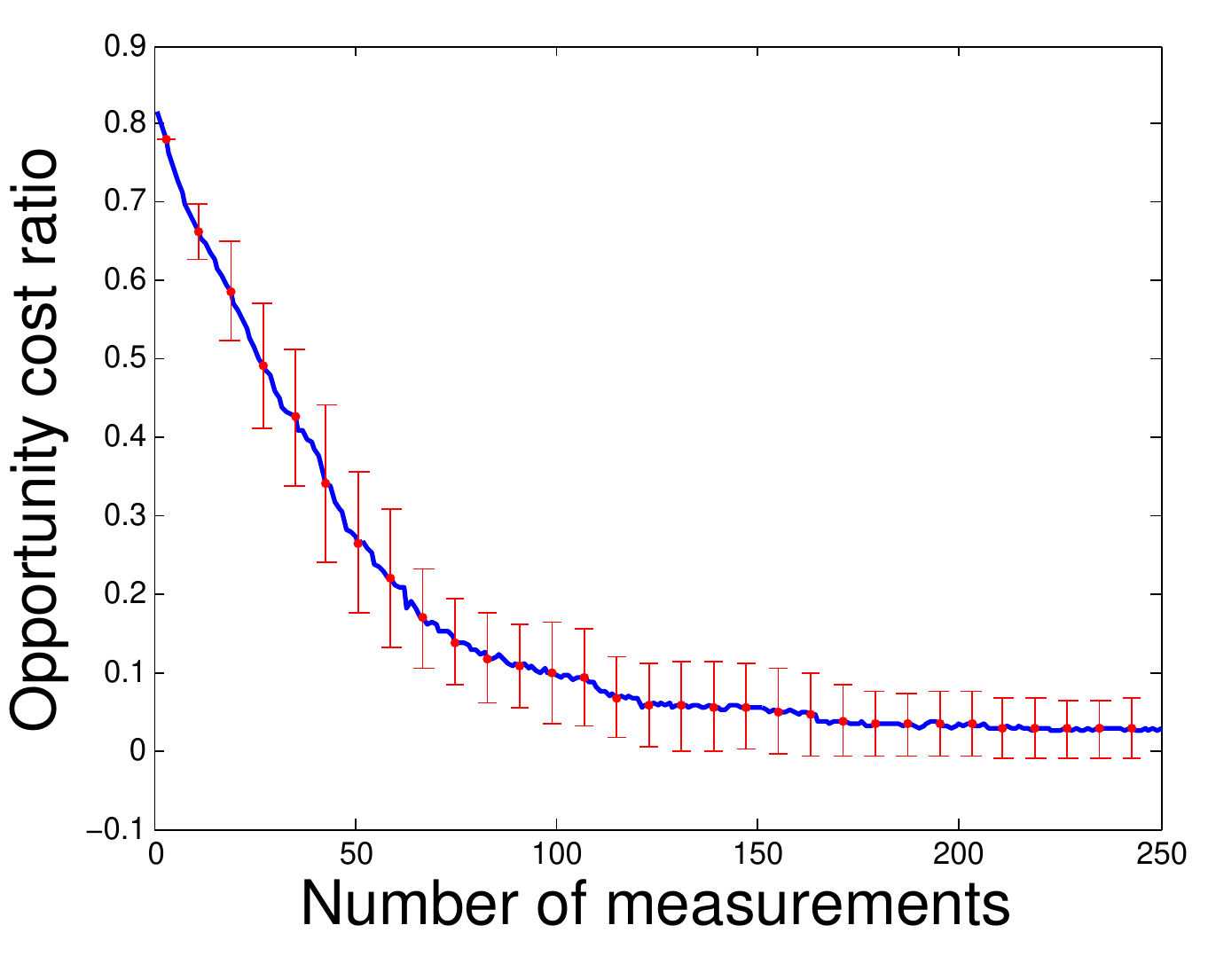}}
      \subfigure[AUF  with $\theta_2=0.8\theta_1$]{   \includegraphics[width=0.32\textwidth]{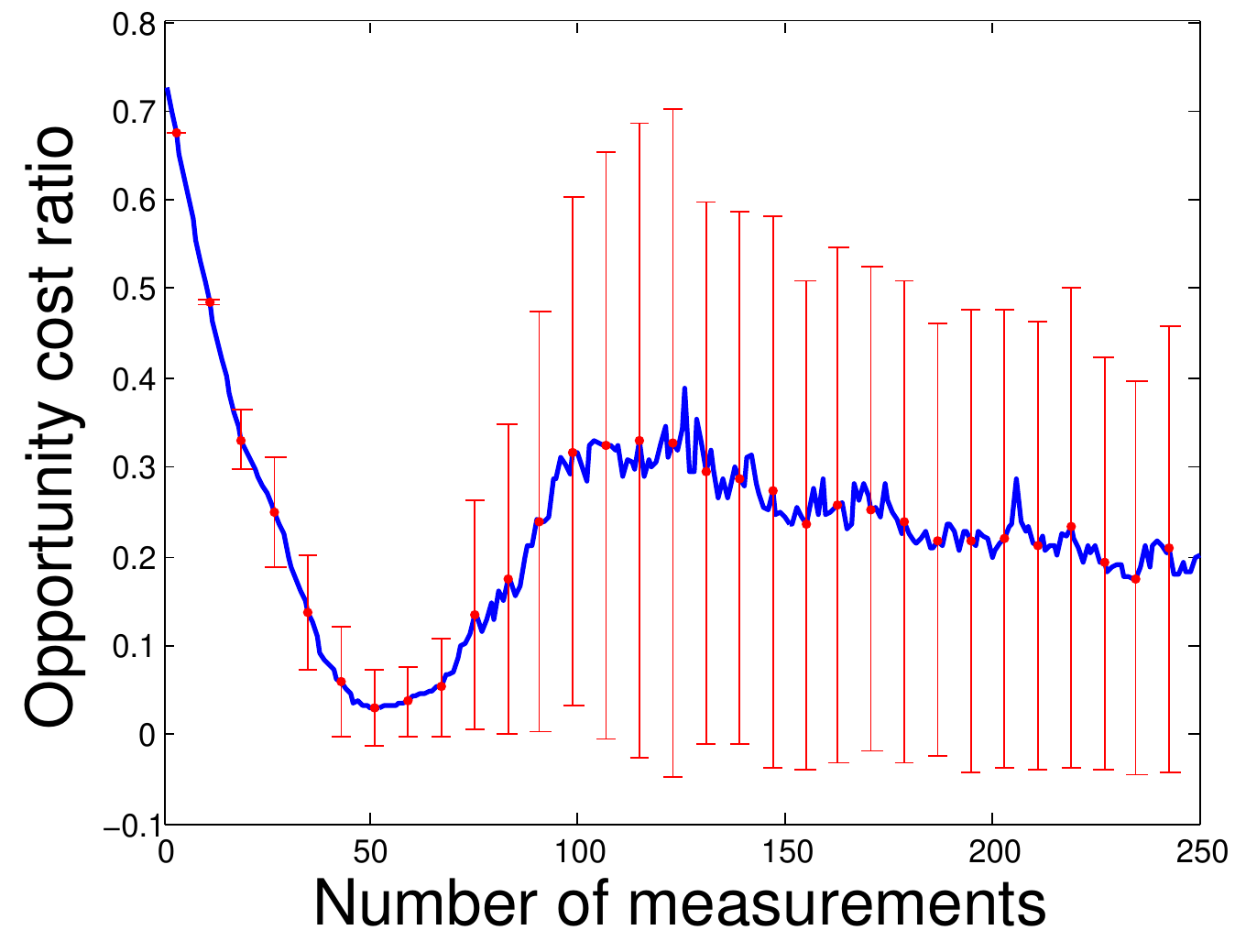}}
    \caption{Opportunity cost ratio. \label{ocr}}
   \label{concavity}
\end{figure}

We next compare the performance of KG, IE with tuning, UCB-E with tuning, SR, EXPL and EXPT. Figure \ref{bu} shows the performance in problem classes AUF and Goldstein  with independent beliefs under a measurement budget  five times the number of alternatives.  
\begin{figure}[hp!]
    \centering
   \subfigure[AUF:  Opportunity cost]{\includegraphics[width=0.4\textwidth]{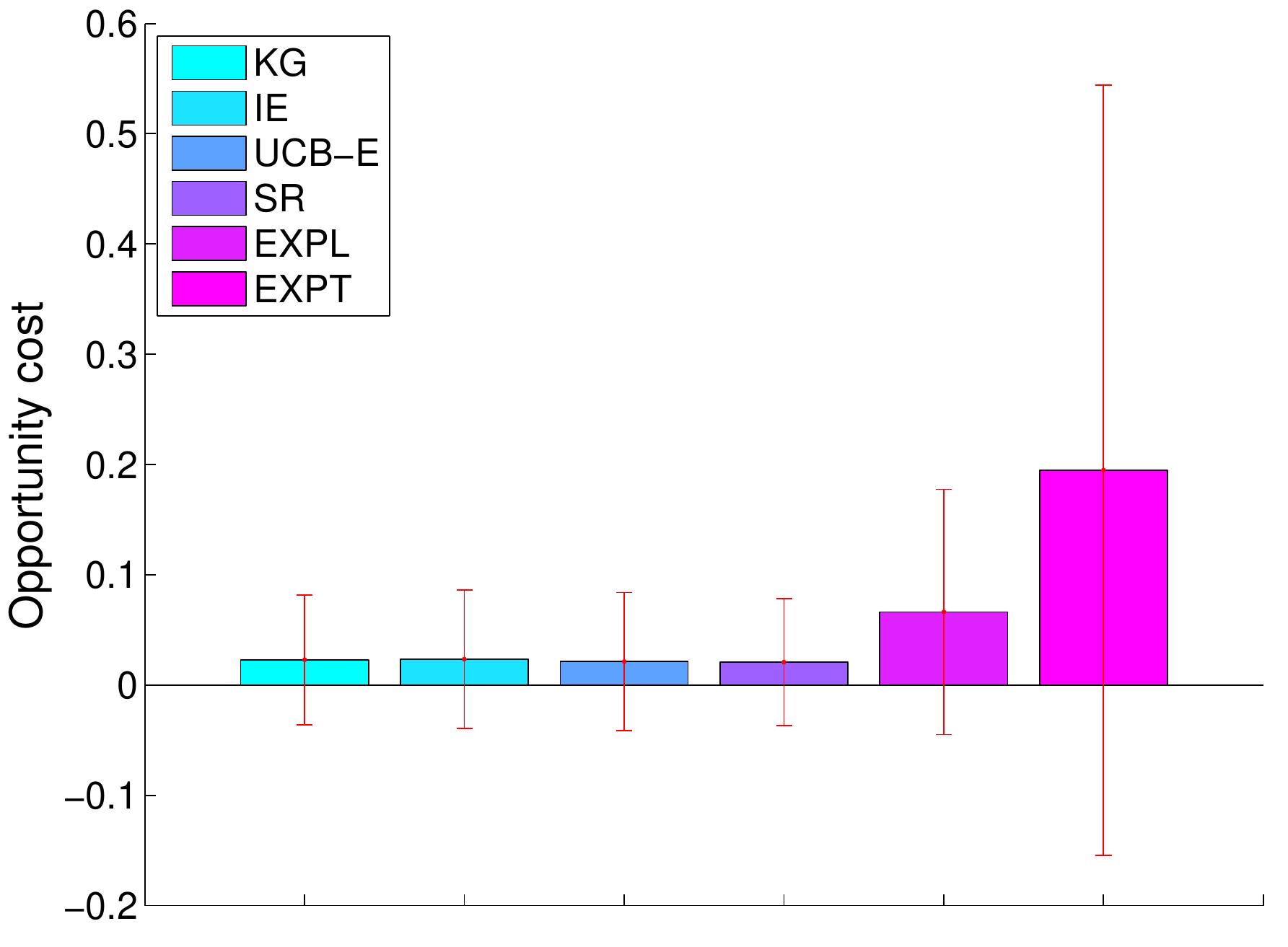}} ~~~
   \subfigure[AUF: Probability of optimality/winning]{   \includegraphics[width=0.4\textwidth]{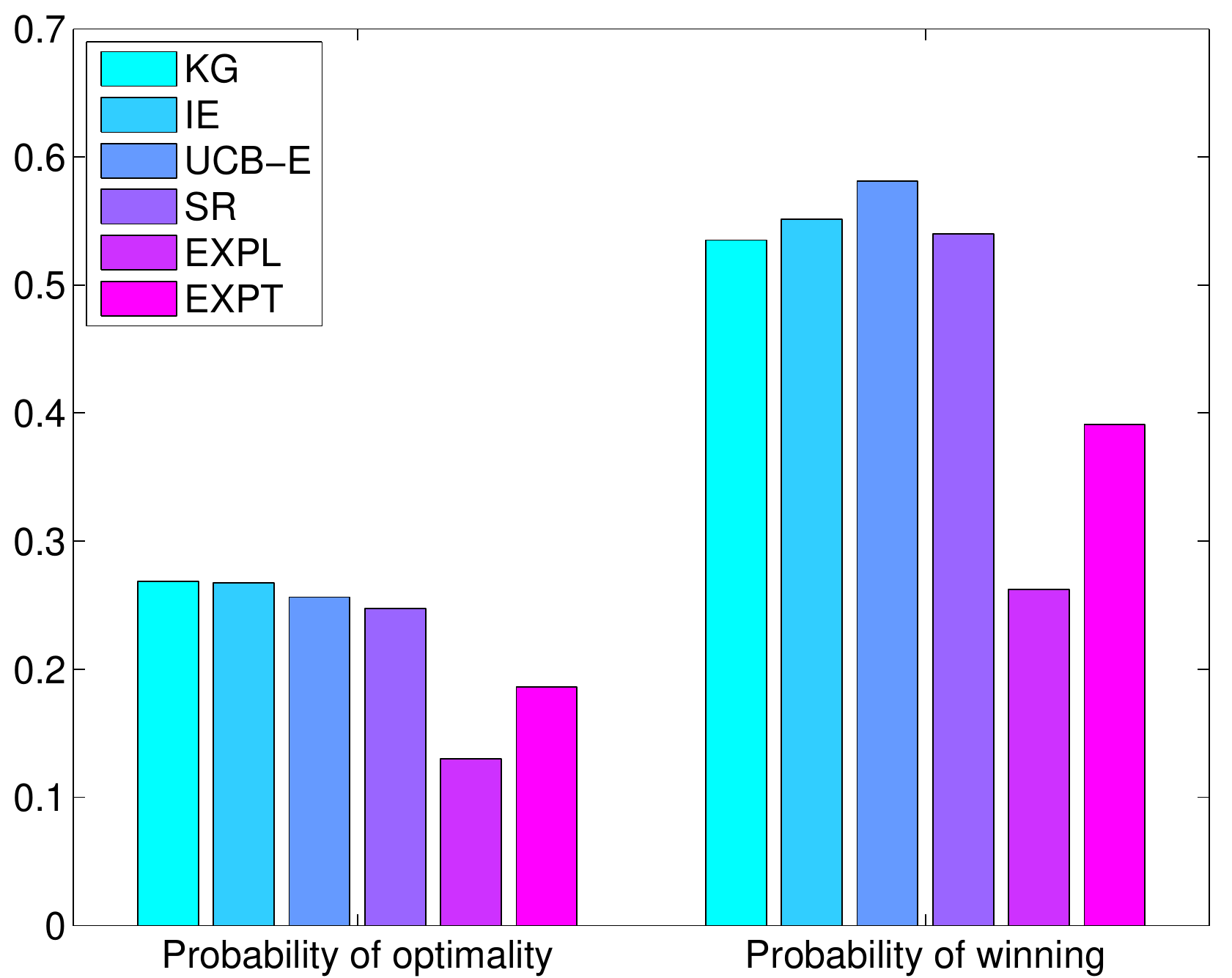}}\\
      \subfigure[Goldstein: Opportunity cost]{   \includegraphics[width=0.4\textwidth]{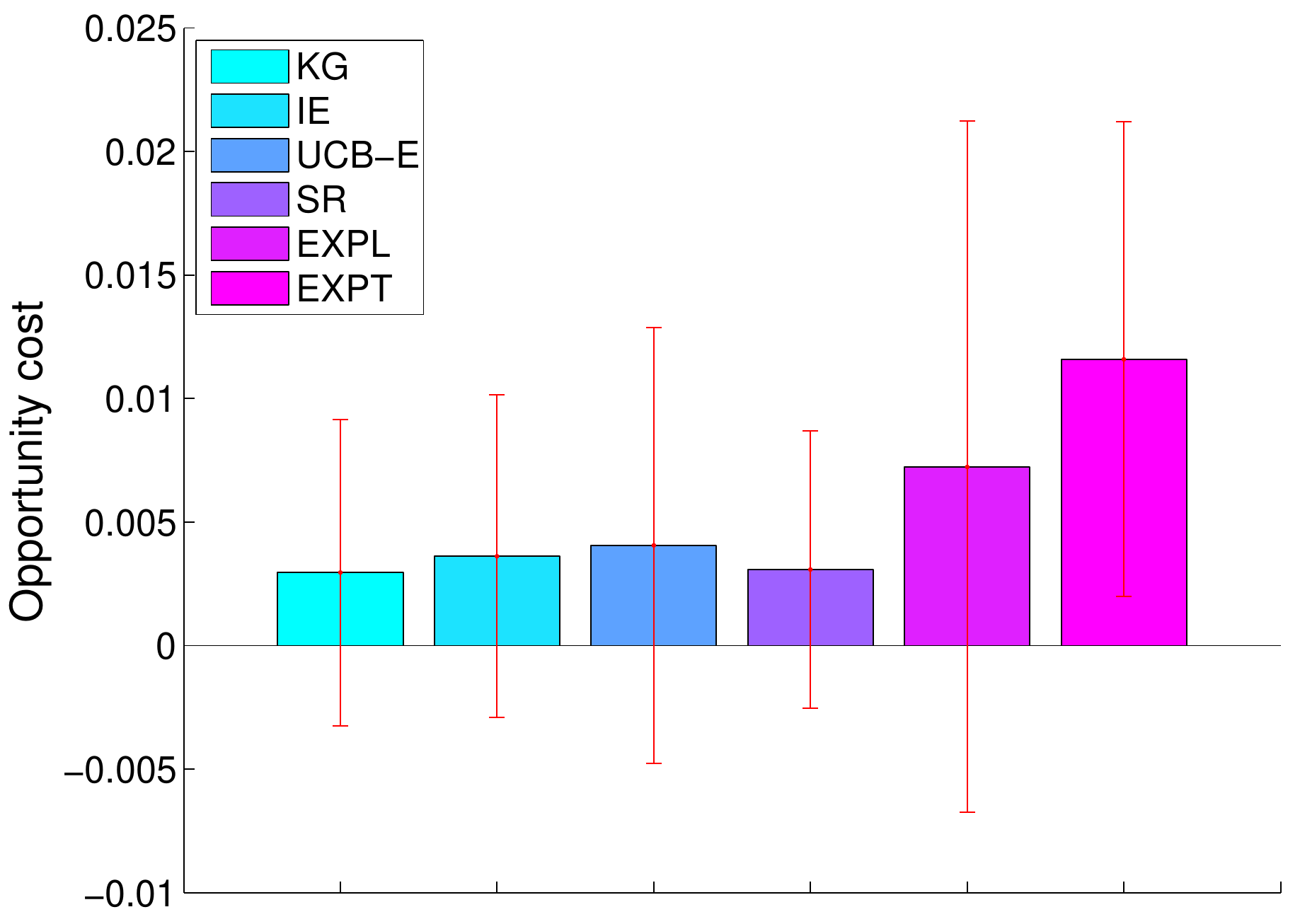}}~~~
   \subfigure[Goldstein:Probability of optimality/winning]{\includegraphics[width=0.4\textwidth]{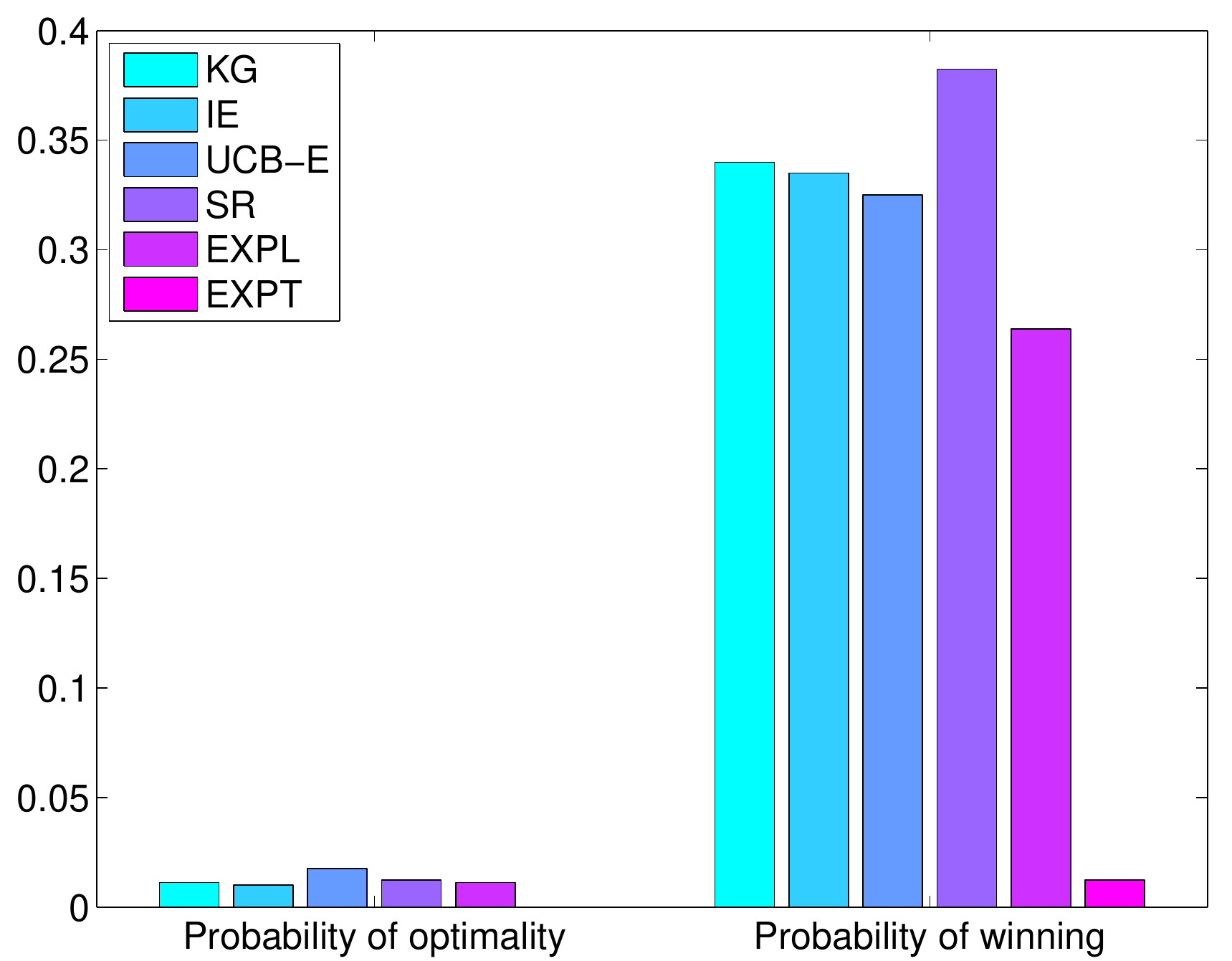}}
    \caption{Comparisons for AUF and Goldstein. (a) and (c) depict the mean opportunity cost with error bars indicating the standard deviation of each policy. The first bar group in (b) and (d) demonstrates the probability that the final recommendation of each policy is the optimal one.  The second bar group in (b) and (d)  illustrates the probability that the opportunity cost of each policy is the lowest. }
    \label{bu}
\end{figure}
We run each policy for 1000 times. In each run, we pre-generate all the observations and share across different policies. We  illustrate in the first column of Figure \ref{bu} the mean opportunity cost and the standard deviation of each policy  over 1000 runs after the measurement budget is exhausted.

In order to give a comprehensive  comparison based on different metrics, we also calculate the probability that the final recommendation of each policy is the optimal one and   the probability that the opportunity cost of each policy is the lowest, as illustrated in the figures on the right hand side of Figure  \ref{bu}. 

The three criteria characterize the behavior of policies from different perspectives. One observation is that there is no universal best policy for all problem classes or under all criteria, which means that theoretical guarantees are not by themselves reliable indicators of which policy is best for a particular problem class.

We also exploit correlated beliefs between alternatives in order to strengthen the effect of each measurement so that one measurement of some alternative can provide information for other alternatives. 

First, we present the OC of different policies  after each iteration under AUF ($\theta_2=0.5\theta_1$) in Figure \ref{correlated_Trend}.    We tune $z_\alpha$ for IE and $\alpha$ for UCB for $N=400$ measurements and the optimal values are $z_\alpha=0.969$ and $\alpha=6.657$. Since UCB-E needs to measure each alternative once, we omit the OC for its first 100 (which is the number of alternatives) steps. KG uses independent beliefs while KGCB, IE and Kriging start from MLE fitted correlated beliefs.  When incorporating correlated beliefs, a measurement of one alternative tells us something about  other alternatives. As a result, KGCB learns faster than KG. 
\begin{figure}[htp!]
    \centering
  \includegraphics[width=0.5\textwidth]{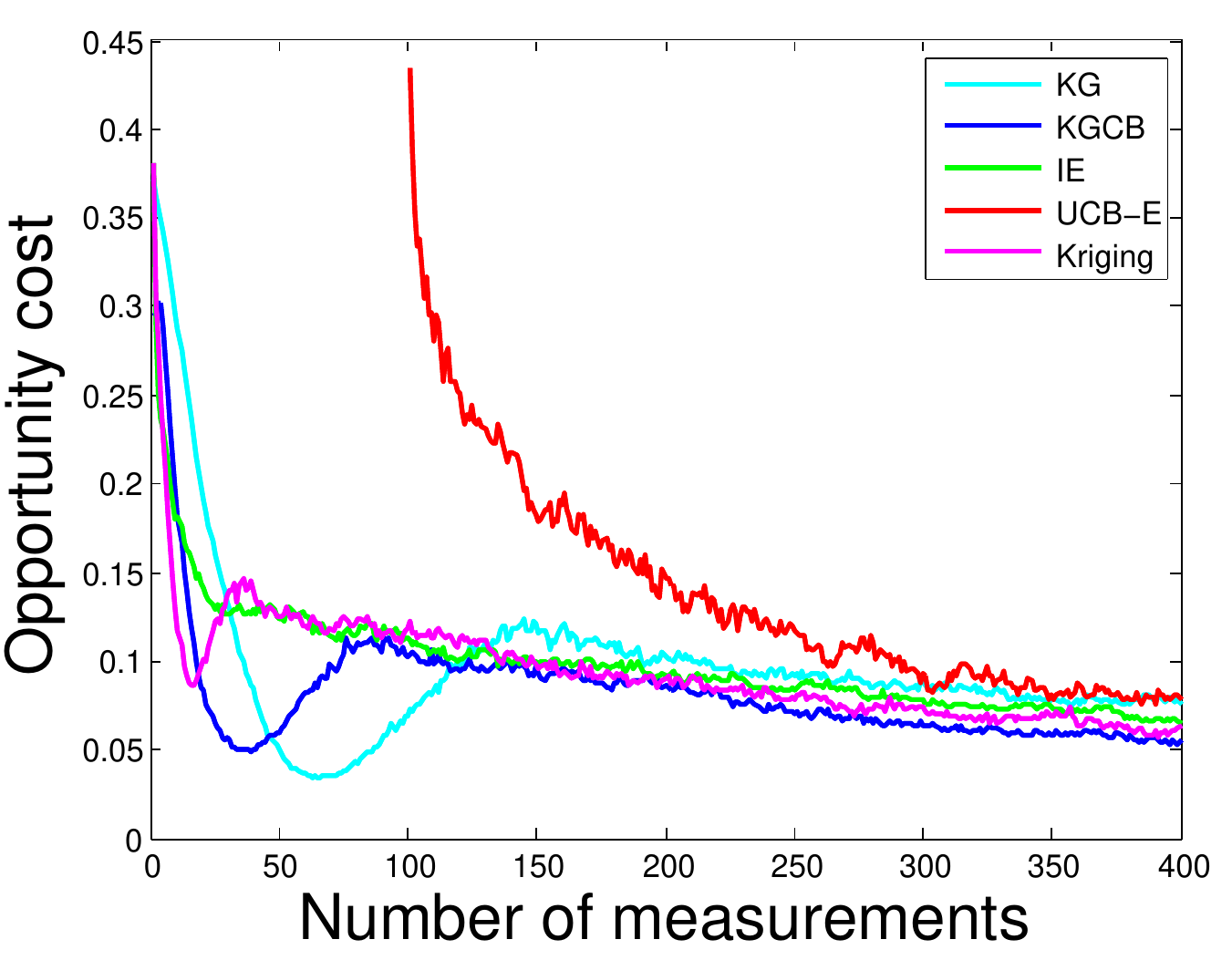}
    \caption{ OC obtained by each policy after each measurement under AUF ($\theta_2=0.5\theta_1$).}
    \label{correlated_Trend}
\end{figure}

\section{Conclusion}
In this paper, we offer a new perspective of interpreting 
 ranking and selection problems as adaptive stochastic multi-set maximization problems. We present the first finite-time bounds for the knowledge gradient on both the posterior optimality and the prior optimality. The prior view provides a cleaner relationship between the performance of the policy and the sample taken, making it possible to relate the value of information to the submodularity of the sample. We analyze the submodularity of the two-alternative case  and provide other conditions for more general problems, bringing out the issue and importance of submodularity in leaning problems.  We propose experiments to further illustrate the finite time behavior of the knowledge gradient policy as well as other policies with or without theoretical guarantees.

\newpage

\appendix
\section{Proofs} 
\subsection{Proof of Lemma \ref{conL}}\label{A}

 For any $\psi$ with $|\psi|=n$, we consider the resulting knowledge state $S^n=(\theta^n_x, \beta^n_x)_{x \in \mathcal{X}}$. Since $\sigma^W \neq 0$, there exists such $\psi$  that $\max_{x}\theta_x^n > \max_{x\neq x'}\theta_x^n$ with positive probability. Now consider another realization $\psi'$ with  $\text{dom}(\psi') =\text{dom}(\psi) \cup \{x_2\}$, where  $x_2$ is the second largest alternative of $\theta_x^n$. We denote the observation of $x_2$ in $\psi'$ as $W_2$ and the resulting $S^{n+1}$ as $(\theta^{n+1}_x, \beta^{n+1}_x)_{x \in \mathcal{X}}$ according to Bayes' rule. The knowledge gradient $\Delta(x|\psi)=\nu_x^{\text{KG},n}$ can be analytically expressed by  $$\nu_x^{\text{KG},n}=\tilde{\sigma}_x^nf(\zeta_x^n),$$
where $\tilde{\sigma}_x^n=\sqrt{(\beta_x^n)^{-1}-(\beta_x^n+\beta^W)^{-1}}$, $\zeta_x^n=-\Big|\frac{\theta^n_x- \max_{x' \neq x} \theta^n_{x'}}{\tilde{\sigma}_x^n}\Big|$ and $f(\zeta)=\zeta\Phi(\zeta)+\phi(\zeta)$.  $\Phi(\zeta)$ and $\phi(\zeta)$ are, respectively, the cumulative standard normal distribution  the standard normal density \cite{frazier2008knowledge}.  We first notice that $f'(\zeta)=\Phi(\zeta) \ge 0$ for any $\zeta \in \mathbb{R}$ so that $f(\zeta)$ is non-decreasing. We next compare $\nu_{x_1}^{\text{KG},n}$ and $\nu_{x_1}^{\text{KG},n+1}$ for  $x_1=\arg\max_x\theta_x^n$. According to Bayes' rule, the precision $\beta$ of $x_2$ changes only when $x_2$ is measured. So we have $\tilde{\sigma}_x^n = \tilde{\sigma}_x^{n+1}.$ Similarly we have all the $\theta_{x}^{n+1}$ unchanged except for alternative $x_2$. By some algebra, it can be shown that for any $W_2$ such that $\theta_{x_2}^{n} < W_2 \le \frac{\beta^{n}_{x_2}}{\beta^W}(\theta_{x_1}^{n}-\theta_{x_2}^{n})+\theta_{x_1}^{n}$, we have $\nu_{x_1}^{\text{KG},n} < \nu_{x_1}^{\text{KG},n+1}.$ Since $\theta_{x_1}^{n}>\theta_{x_2}^{n}$ by construction, such $W_2$ can be obtained with positive probability.

\subsection{Proof of Proposition \ref{a1}}\label{B}
In this appendix, we prove the properties of submodular multi-set functions. 
We prove the equivalence by showing $2)\Rightarrow 1) \Rightarrow 3) \Rightarrow 4) \Rightarrow 2)$.

\begin{itemize}
\item $2)\Rightarrow 1)$. Take $S \subseteq T$ and $T-S=\{x_1,x_2,...,x_r\}$. Then from $3)$ we have
$\rho_x(S) \geq \rho_x(S \cup \{x_1\})$, $\rho_x(S \cup \{x_1\}) \geq \rho_x(S \cup \{x_1,x_2\})$,..., $\rho_x(S \cup \{x_1,x_2,...,x_{r-1}\}) \geq \rho_x(T)$. Summing these $r$ inequalities yields $1)$.

\item $1) \Rightarrow 3)$. For arbitrary $S$ and $T$ with $T-S=\{x_1,x_2,...,x_r\}$ and $S-T=\{y_1,y_2,...,y_q\}$, from $1)$ we have
\begin{eqnarray} \label{8}
\nonumber
g(S \cup T) - g(S) &&= \sum_{t=1}^r [g(S  \cup \{x_1,...,x_t\})-g( S  \cup \{x_1,...,x_{t-1}\})] \\ \nonumber
&&= \sum_{t=1}^r \rho_{x_t}(S\cup \{x_1,...,x_{t-1}\}) \\
&& \leq \sum_{t=1}^r \rho_{x_t}(S) = \sum_{x\in T-S} \rho_x(S).
\end{eqnarray}
And
\begin{eqnarray} \label{9}
\nonumber
g(S \cup T) - g(T) &&= \sum_{t=1}^q [g(T  \cup \{y_1,...,y_t\})-g( T  \cup \{y_1,...,y_{t-1}\})] \\ \nonumber
&&= \sum_{t=1}^q \rho_{y_t}(T\cup \{y_1,...,y_{t}\}-\{y_t\}\}) \\
&& \geq \sum_{t=1}^q \rho_{y_t}(T\cup S - \{y_t\}) = \sum_{x\in S-T} \rho_x(S \cup T - \{x\}).
\end{eqnarray}

Subtracting equation (\ref{9}) from equation (\ref{8}) we get $3)$.

\item $3)\Rightarrow 4)$. If $S \subseteq  T$, $S-T=\emptyset$, and therefore the last term in $3)$ vanishes.

\item $4) \Rightarrow 2)$. Substitute $T=S \cup \{x,y\}$ into $4)$ to obtain $$g(S \cup \{x,y\}) \leq g(S) + \rho_x(S)+\rho_y(S)=\rho_x(S)+g(S\cup \{y\}).$$
Rearrange this inequality, we get
$$\rho_x(S \cup \{y\})= g(S \cup \{x,y\})-g(S\cup \{y\} \leq \rho_x(S). $$
\end{itemize}

\subsection{Proof of Proposition \ref{4}}\label{C}

Let $z^*(Z, \pi, \Phi)$ be the next adaptive greedy choice that maximizes the expected marginal increment given that policy $\pi$ has generated $Z$. We first show that
$$F^{\pi_2@\pi_1}\le F^{\pi_2}+n_1\sum_{Z\in\mathcal{Z}^n}\mathbb{P}(\pi_2  \leadsto Z)\Big{(}\mathbb{E}\big{[}\hat{v}(Z\cup \{z^*(Z, \pi_2, \Phi)\},\Phi)\big{]}-v(Z)\Big{)}$$ for all policies $\pi_1$ with a measurement budget $n_1$ and $\pi_2$ with a budget $n_2$ under any prior and probability distribution that
describes a measurement.

\begin{proof}
Let $\pi^{[j]}$ denote the first $j$ measurement decisions under some policy $\pi$. 
First of all we break $F^{\pi_2@\pi_1}-F^{\pi_2}$  into $n_1$ consecutive differences,
$$F^{\pi_2@\pi_1}-F^{\pi_2}=\sum_{j=1}^{n_1}\Big{(}F^{\pi_2@\pi_1^{[j]}}-F^{\pi_2@\pi_1^{[j-1]}}\Big{)}.$$
Similar to what we did in the last lemma, for each difference we have
\begin{eqnarray*}
&&F^{\pi_2@\pi_1^{[j]}}-F^{\pi_2@\pi_1^{[j-1]}}\\&=&
\sum_{Z_1\in\mathcal{Z}^{n_2+j}}\mathbb{P}(\pi_2@\pi_1^{[j]}  \leadsto Z_1)v(Z_1)-
\sum_{Z_2\in\mathcal{Z}^{n_2+j-1}}\mathbb{P}(\pi_2@\pi_1^{[j-1]}  \leadsto Z_2)v(Z_2)\\
&=&
\sum_{Z_1\in\mathcal{Z}^{n_2+j}}\sum_{Z_2\in\mathcal{Z}^{n_2+j-1},Z_2\cup Z_3=Z_1}\mathbb{P}(\pi_2@\pi_1^{[j-1]}  \leadsto Z_2)\mathbb{P}(\pi_1^{\{j\}}  \leadsto Z_3|\pi_2@\pi_1^{[j-1]}  \leadsto Z_2)v(Z_1)\\
&-&\sum_{Z_2\in\mathcal{Z}^{n_2+j-1}}\sum_{Z_3\in\mathcal{Z}^1}\mathbb{P}(\pi_2@\pi_1^{[j-1]}  \leadsto Z_2)\mathbb{P}(\pi_1^{\{j\}}  \leadsto Z_3|\pi_2@\pi_1^{[j-1]}  \leadsto Z_2)v(Z_2)\\
&=&\sum_{Z_2\in\mathcal{Z}^{n_2+j-1}}\sum_{Z_3\in\mathcal{Z}^1}\mathbb{P}(\pi_2@\pi_1^{[j-1]}  \leadsto Z_2)\mathbb{P}(\pi_1^{\{j\}}  \leadsto Z_3|\pi_2@\pi_1^{[j-1]}  \leadsto Z_2)\big{(}v(Z_2\cup Z_3)-v(Z_2)\big{)}.
\end{eqnarray*}
Now we consider all possible pair $(Z_4,Z_5)$ such that $Z_4\in\mathcal{Z}^{n_2}$, $Z_5\in\mathcal{Z}^{j-1}$ and $Z_4\cup Z_5=Z_2$. Notice that the policy $\pi_2@\pi_1^{[j]}$ employs a fresh start at the time $n_2$, therefore
the events before and after time $n_2$ are independent.
Then we have
\begin{eqnarray*}
&&\sum_{Z_2\in\mathcal{Z}^{n_2+j-1}}\sum_{Z_3\in\mathcal{Z}^1}\mathbb{P}(\pi_2@\pi_1^{[j-1]}  \leadsto Z_2)\mathbb{P}(\pi_1^{\{j\}}  \leadsto Z_3|\pi_2@\pi_1^{[j-1]}  \leadsto Z_2)\big{(}v(Z_2\cup Z_3)-v(Z_2)\big{)}\\
&=&\sum_{Z_2\in\mathcal{Z}^{n_2+j-1}}\sum_{Z_4\cup Z_5=Z_2}\sum_{Z_3\in\mathcal{Z}^1}
\mathbb{P}(\pi_2  \leadsto Z_4)\mathbb{P}(\pi_1^{[j-1]}  \leadsto Z_5)\mathbb{P}(\pi_1^{\{j\}}  \leadsto Z_3|\pi_2@\pi_1^{[j-1]}  \leadsto Z_2)\\&&\times\big{(}v(Z_2\cup Z_3)-v(Z_2)\big{)}.
\end{eqnarray*}
Based on the submodular property of function $v$,
we have
$$v(Z_2\cup Z_3)-v(Z_2)\le v(Z_4\cup Z_3)-v(Z_4).$$
Then from the definition of $z^*$, we have

\begin{eqnarray*}
v(Z_4 \cup Z_3)-v(Z_4) &=& \mathbb{E}[\hat{v}(Z_4 \cup Z_3, \Phi) -\hat{v}(Z_4,\Phi)]\\
&=&\mathbb{E}_{\Phi} \big[\mathbb{E}[\hat{v}(Z_4 \cup Z_3, \Phi)-\hat{v}(Z_4,\Phi)|Z^{\pi_2}(\Phi)=Z_4]\big] \\
&\le&\mathbb{E}_{\Phi} \big[\mathbb{E}[\hat{v}(Z_4 \cup \{z^*(Z_4,\pi_2,\Phi)\}, \Phi)-\hat{v}(Z_4,\Phi)|Z^{\pi_2}(\Phi)=Z_4]\big] \\
&=&\mathbb{E}_\Phi[\hat{v}(Z_4\cup \{z^*(Z_4, \pi_2, \Phi )\}, \Phi)]-v(Z_4).
\end{eqnarray*}

Combining the last two inequalities,
we have
\begin{eqnarray*}
&&\sum_{Z_2\in\mathcal{Z}^{n_2+j-1}}\sum_{Z_4\cup Z_5=Z_2}\sum_{Z_3\in\mathcal{Z}^1}
\mathbb{P}(\pi_2  \leadsto Z_4)\mathbb{P}(\pi_1^{[j-1]}  \leadsto Z_5)\mathbb{P}(\pi_1^{\{j\}}  \leadsto Z_3|\pi_2@\pi_1^{[j-1]}  \leadsto Z_2)\\&&\times \big{(}v(Z_2\cup Z_3)-v(Z_2)\big{)}\\
&\le&
\sum_{Z_2\in\mathcal{Z}^{n_2+j-1}}\sum_{Z_4\cup Z_5=Z_2}\sum_{Z_3\in\mathcal{Z}^1}
\mathbb{P}(\pi_2  \leadsto Z_4)\mathbb{P}(\pi_1^{[j-1]}  \leadsto Z_5)\mathbb{P}(\pi_1^{\{j\}}  \leadsto Z_3|\pi_2@\pi_1^{[j-1]}  \leadsto Z_2)\\&&
\times\big{(} \mathbb{E}\hat{v}(Z_4\cup \{z^*(Z_4, \pi_2,\Phi)\},\Phi)-v(Z_4)\big{)}\\
&=&
\sum_{Z_2\in\mathcal{Z}^{n_2+j-1}}\sum_{Z_4\cup Z_5=Z_2}
\mathbb{P}(\pi_2  \leadsto Z_4)\mathbb{P}(\pi_1^{[j-1]}  \leadsto Z_5)\big{(} \mathbb{E}\hat{v}(Z_4\cup \{z^*(Z_4, \pi_2, \Phi)\},\Phi)-v(Z_4)\big{)}\\&=&
\sum_{Z_4\in\mathcal{Z}^{n_2}}\sum_{Z_5\in\mathcal{Z}^{j-1}}
\mathbb{P}(\pi_2  \leadsto Z_4)\mathbb{P}(\pi_1^{[j-1]}  \leadsto Z_5)\big{(} \mathbb{E}\hat{v}(Z_4\cup \{z^*(Z_4, \pi_2, \Phi )\}, \Phi )-v(Z_4)\big{)}\\
&=&\sum_{Z_4\in\mathcal{Z}^{n_2}}
\mathbb{P}(\pi_2  \leadsto Z_4)\big{(} \mathbb{E}\hat{v}(Z_4\cup \{z^*(Z_4, \pi_2, \Phi)\},\Phi)-v(Z_4)\big{)},
\end{eqnarray*}
and this ends the proof.
\end{proof}

Set $\pi_1=\pi^*$ and $\pi_2 =\   \text{KG}  ^{[n-1]}$ in Lemma \ref{mon} and the above proposition
then what left to show is that
$$F^{   \text{KG}  ^{[n]}}-F^{   \text{KG}  ^{[n-1]}} \ge \sum_{Z\in\mathcal{Z}^n}\mathbb{P}(\pi_2  \leadsto Z)\Big{(}\mathbb{E}\hat{v}(Z\cup \{z^*(Z,\text{KG}^{[n-1]}, \Phi)\},\Phi)-v(Z)\Big{)}.$$
From the definition, the left hand side of the last equation:
\begin{eqnarray*}
F^{   \text{KG}  ^{[n]}}-F^{   \text{KG}  ^{[n-1]}}&=&\sum_{Z_1\in\mathcal{Z}^{n+1}}\mathbb{P}(   \text{KG}    \leadsto Z_1)v(Z_1)-
\sum_{Z_2\in\mathcal{Z}^{n}}\mathbb{P}(   \text{KG}    \leadsto Z_2)v(Z_2)\\
&=&\sum_{Z_2\in\mathcal{Z}^{n}}\sum_{Z_3\in\mathcal{Z}^{1}}\mathbb{P}(   \text{KG}    \leadsto Z_2)\mathbb{P}(   \text{KG}   \leadsto Z_3|\text{KG}   \leadsto Z_2)v(Z_2\cup Z_3)\\&&-
\sum_{Z_2\in\mathcal{Z}^{n}}\mathbb{P}(   \text{KG}    \leadsto Z_2)v(Z_2).\\
\end{eqnarray*}
Now it is enough to show that
\begin{eqnarray*}
&&\sum_{Z_3\in\mathcal{Z}^{1}}\mathbb{P}(   \text{KG}   \leadsto Z_3|\text{KG}   \leadsto Z_2)v(Z_2\cup Z_3)-v(Z_2) \\
\ge
&&~\mathbb{E}\hat{v}(Z_2\cup\{ z^*(Z_2, \text{KG}^{[n-1]}, \Phi )\},\Phi)-v(Z_2).\end{eqnarray*}
We could group together the partial realizations $\psi$ that lead to the  same single step optimal decision $z^*(Z_2, \text{KG}^{[n-1]}, \Phi)$,
and then the last inequality follows from the adaptive greedy nature  of the KG policy.

\subsection{Proof of Theorem \ref{a17}} \label{D}

First of all, we consider the case when $f$ is a two dimensional function and the four points we pick form a rectangle.
Assume $f(x,y)$ is submodular.
For any given point $(x_0,y_0)$, we have $f(x_0+t+s,y_0)-f(x_0+t,y_0) \le f(x_0+s,y_0) - f(x_0,y_0)$  and $f(x_0+t,y_0)-f(x_0,y_0)\le f(x_0+t,y_0+s)-f(x_0,y_0+s)$ for any $s,t >0$.
From the first inequality we get $f_{xx}(x_0,y_0) \le 0$ directly.
From the second inequality, we have $f_x(x_0,y_0)\le f_x(x_0,y_0+s)$, and finally $f_{x,y}(x_0,y_0)\le 0$.
On the other hand, if we have $f_{xy}\le 0$, $f_{xx} \le 0$, for any $(x,y)$, then due to the fact that
$f(x_0+t,y_0+s)-f(x_0+t,y_0)-\big{(} f(x_0,y_0+s)-f(x_0,y_0) \big{)}=\int_{x_0}^{x_0+t}\int_{y_0}^{y_0+s}f_{xy}(u,v)\mathrm{d}u\mathrm{d}v \le 0$,
$f(x_0+t+s,y_0)-f(x_0+t,y_0) - \big{(} f(x_0+s,y_0) - f(x_0,y_0) \big{)}= stf_{xx}(x_0+\xi,y_0) \le 0$,
we obtain the submodularity.

We next consider the general case  when $f$ is $n$ dimensional and the four points only form a parallelogram.
Since the difference between the two marginal values can be decomposed into summation of several marginal value differences whose reference points form rectangles that parallel to coordinate planes, the result for the general case is straightforward from the two dimensional case.

\bibliographystyle{siamplain}
\bibliography{references}
\end{document}